\documentclass{article}
\usepackage{multirow}
\usepackage{tcolorbox}
\usepackage{xcolor}
\usepackage{soul} 
\usepackage{listings}
\usepackage{microtype}
\usepackage{graphicx}
\usepackage{subcaption}
\usepackage{booktabs} 
\usepackage{hyperref}

\usepackage[accepted]{icml2026}

\usepackage{amsmath}
\usepackage{amssymb}
\usepackage{mathtools}
\usepackage{amsthm}

\usepackage[capitalize,noabbrev]{cleveref}

\theoremstyle{plain}
\newtheorem{theorem}{Theorem}[section]

\theoremstyle{definition}

\theoremstyle{remark}

\def\ul{\underline}

\usepackage[textsize=tiny]{todonotes}

\icmltitlerunning{AnyEdit++}
\usepackage{xcolor}
\usepackage[table]{xcolor}
\usepackage{todonotes}

\begin{document}

\twocolumn[
  \icmltitle{AnyEdit++: Adaptive Long-Form Knowledge Editing via Bayesian Surprise}



\icmlsetsymbol{equal}{*}
\icmlsetsymbol{cocorresp}{$\dagger$}

\begin{icmlauthorlist}
\icmlauthor{Bowen Tian}{equal,hkustgz}
\icmlauthor{Caixue He}{equal,hkustgz}
\icmlauthor{Jiemin Wu}{hkustgz}
\icmlauthor{Jingying Wang}{seu}
\icmlauthor{Wenshuo Chen}{hkustgz}
\icmlauthor{Zexi Li}{cocorresp,knowin,cuhk}
\icmlauthor{Yutao Yue}{cocorresp,hkustgz,idpt}
\end{icmlauthorlist}

\icmlaffiliation{hkustgz}{The Hong Kong University of Science and Technology (Guangzhou), Guangzhou 511400, China}
\icmlaffiliation{seu}{Southeast University, Nanjing, China}
\icmlaffiliation{knowin}{Knowin AI, Shenzhen, China}
\icmlaffiliation{cuhk}{The Chinese University of Hong Kong, Hong Kong SAR, China}
\icmlaffiliation{idpt}{Institute of Deep Perception Technology, JITRI, Wuxi 214000, China}

\icmlcorrespondingauthor{Yutao Yue}{yutaoyue@hkust-gz.edu.cn}
\icmlcorrespondingauthor{Zexi Li}{lizexi@knowin.ai}
\icmlkeywords{Machine Learning, ICML}

\vskip 0.3in
]



\printAffiliationsAndNotice{\icmlEqualContribution}  

\begin{abstract}
Editing complex, long-form knowledge in Large Language Models remains a significant challenge due to the difficulty of maintaining generation coherence. Existing autoregressive methods like AnyEdit alleviate length constraints but rely on Fixed-window Chunking, which disregards logical structure and compromises consistency. To address this, we present \textbf{AnyEdit++}, a structure-aware framework incorporating Bayes-Chunk, an adaptive segmentation mechanism that dynamically identifies semantic boundaries based on Bayesian Surprise. We underpin this approach with a theoretical framework establishing two key principles: (1) Structural Independence: we prove that cross-segment interference is minimized when anchor keys are geometrically orthogonal (a condition naturally satisfied by our surprisal-based boundaries but violated by fixed windows), and (2) Causal Locality: we demonstrate that updates injected at these semantic peaks yield strictly superior control compared to arbitrary split points. Extensive experiments across mathematical reasoning, code generation, and narrative tasks demonstrate that AnyEdit++ achieves superior performance and robustness compared to state-of-the-art baselines, validating that structural awareness is critical for effective long-form knowledge editing. Our code is available at \href{https://github.com/TianSuya/Bayes-Chunk}{Github}.\looseness=-1
\end{abstract}

\vspace{-8pt}
\section{Introduction}
\vspace{-4pt}

Large Language Models (LLMs) have demonstrated remarkable capabilities in acquiring and storing vast amounts of knowledge \cite{achiam2023gpt,grattafiori2024llama, tian2025text2weight}. However, they inevitably suffer from hallucinations \cite{zhang2025llm} or generate outdated and incorrect information \cite{meng2022locating}. Although fine-tuning can rectify model behavior \cite{wei2021finetuned}, its high computational cost and potential risk of overfitting make \textbf{Knowledge Editing} a promising alternative. Traditional \textit{Locate-and-Edit} methods (e.g., ROME~\cite{meng2022locating}, MEMIT~\cite{meng2022mass}) typically adopt a paradigm of modifying (usually with added pertubation) the hidden states of a single critical token on a specific layer to maximize the probability of generating the target object, these hidden states are used to guide the updates of the model weights.

While these approaches perform well on simple triplet facts (e.g., the capital of France is Paris, which includes subject, relation, object), they often fail when addressing complex, long-form knowledge involving dense logical dependencies (e.g., mathematical proofs) \cite{jiang2025anyedit}, as single-point updates struggle to support such intensive semantic alterations. This prevents the model from accurately predicting the knowledge to be edited.

\begin{figure}[t]
\vspace{+8pt}
\centering
\includegraphics[width=0.95\columnwidth]{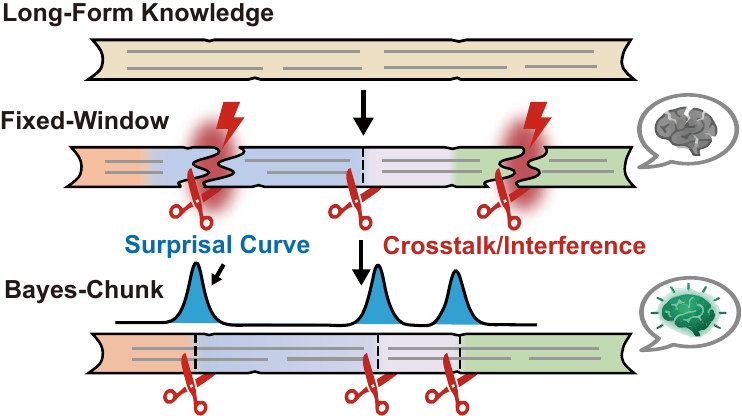}
\caption{Our \textit{\textbf{Bayes-Chunk}} method better identifies semantic boundaries to segment long texts for knowledge editing, whereas the \textit{\textbf{fixed-window}} segmentation underlying AnyEdit tends to split apart complete semantic paragraphs. This introduces \textit{crosstalk} during the editing process (see \autoref{sec:theory_efficiency} for details).}
\label{fig:teaser}
\vspace{-16pt}
\end{figure}

To overcome the limitations of single-token editing, recent research introduced the \textbf{AnyEdit} \cite{jiang2025anyedit} framework, which establishes an autoregressive editing paradigm. AnyEdit decomposes long text into continuous segments and achieves knowledge injection through updates under multiple constraints. Despite pioneering long-form editing, we identify a core design flaw in AnyEdit: it relies on a \textbf{fixed-size sliding window} for chunking. This mechanism is oblivious to the distribution of information density within the text, leading to rigid truncations at critical semantic dependencies (e.g., splitting a function definition or separating a mathematical condition from its conclusion, \autoref{fig:teaser}). This approach disregards the semantic structure of knowledge, may lead to a decline in editing quality as the injected knowledge gradually becomes more complex. 

To address this structural challenge, we propose \textbf{AnyEdit++}. Our core insight stems from the observation that the internal state changes of a model when processing sequential data are not uniform; rather, they reflect specific semantic structures through fluctuations in information content. We introduce \textbf{Bayesian Surprise} to quantify this process. Intuitively, when an LLM processes a logical turn or the onset of a new semantic unit, its internal representation undergoes a significant shift. Empirically, these high-surprisal points align with natural boundaries in the knowledge structure.

Based on this, AnyEdit++ integrates a plug-and-play adaptive segmentation module named \textbf{Bayes-Chunk}. This module dynamically monitors the trajectory of surprisal in the sequence, adaptively aligning segmentation boundaries with positions where the model's hidden states shift significantly. Although the implementation logic of AnyEdit++ is concise, its effectiveness is founded on a \textbf{\textit{rigorous and systematic} theoretical analysis}. Our theoretical framework is primarily constructed from two key perspectives:

\textbf{1) Benefits from Structural Independence}: We theoretically justify why our surprise-guided segmentation yields superior editing stability. We prove that cross-segment interference (\textit{crosstalk}) is minimized when the key representations of partitioned chunks are geometrically orthogonal. By aligning with natural semantic boundaries, our Bayes-Chunk strategy ensures this structural independence, whereas fixed-window approaches induce high correlations between segments that theoretically guarantee increased crosstalk.

\textbf{2) Causal Locality in Control}: Addressing the question of \textit{where} to edit, we establish the \textit{\textbf{Principle of Causal Locality}}. By analyzing gradient propagation through residual streams versus attention mechanisms, we demonstrate that the immediate predecessor state ($t-1$) offers a control channel strictly superior to distant history. This theoretical insight validates our specific strategy of injecting updates (or perturbations) immediately prior to the \textbf{high-surprisal boundaries} identified by Bayes-Chunk, ensuring that maximum control authority is applied precisely when the model's semantic trajectory is at its most critical transition.

To comprehensively validate the performance of AnyEdit++, we conducted \textbf{extensive experiments} on multiple mainstream LLMs and benchmarks involving complex logical tasks (such as mathematical reasoning and code generation). The results indicate that AnyEdit++ not only outperforms all baseline methods (including AnyEdit) on general long-form editing tasks but also achieves significant breakthroughs in editing longer, more logically structured texts.

Our main contributions can be summarized as:

\textbf{\underline{i)}} \textbf{Methodology Contribution}: We propose the AnyEdit++ framework and the Bayes-Chunk algorithm, systematically introducing bayesian surprise into the decision-making process of long-form knowledge editing. This resolves the semantic misalignment caused by fixed windows in a concise and principled manner.
\textbf{\underline{ii)}} \textbf{Theoretical Contribution}: We have established a comprehensive theoretical framework that provides a detailed explanation of the principles underlying the effectiveness of our Bayes-Chunk approach. We prove that optimal performance relies on maximizing distinctness between edits (structural independence) and exploiting direct causal links (locality).
\textbf{\underline{iii)}} \textbf{Extensive Experiments}: Through extensive experiments across multiple domains and models, we confirm the robustness and superiority of AnyEdit++ in handling complex logical and long-form knowledge, without relying on any complex external auxiliary models.

\begin{figure*}[ht]
\centering
\includegraphics[width=0.95\linewidth]{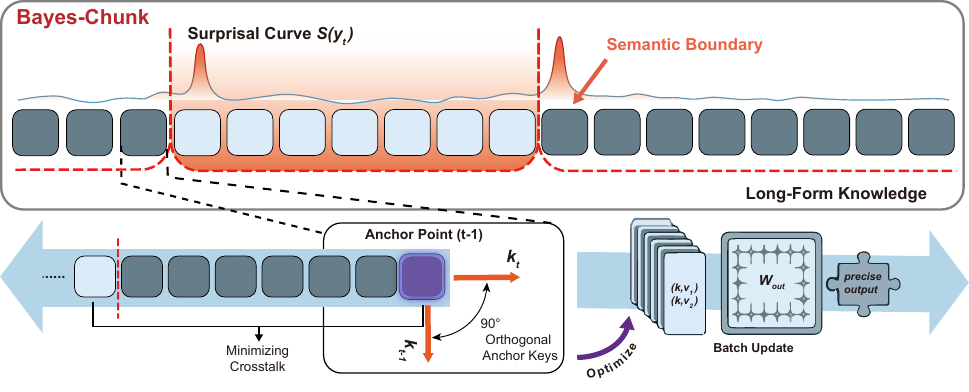}
\caption{This is an overview of our method. When editing long-form knowledge, we compute the Surprisal value for each token in the text, forming the \textit{\textbf{Surprisal Curve}} shown in the figure. We then divide the text into semantic segments based on the Surprisal value's peak points. By injecting perturbations into the hidden state of the token preceding each paragraph, we maximize the generation probability of the current paragraph. Once all perturbations are obtained, they are used to update the model's weights. For details, refer to \autoref{sec:method}.}
\label{fig:overview}
\vspace{-8pt}
\end{figure*}

\section{Related Work}
\label{sec:related}

Knowledge Editing (KE) aims to update specific knowledge in large language models without full retraining, while preserving unrelated behaviors\cite{sinitsin2020editableneuralnetworks,de-cao-etal-2021-editing,editing_as_unlearning}. Existing KE methods are commonly categorized by how the model is modified. Parameter-modifying approaches directly update model weights, including locate-then-edit methods such as ROME \cite{meng2022locating}, MEMIT \cite{meng2022mass}, and WISE \cite{wise}, as well as meta-learning or hypernetwork-based editors like MEND \cite{mitchell2022fastmodeleditingscale} and MALMEN \cite{tan2024massiveeditinglargelanguage}. In contrast, parameter-preserving approaches avoid permanent weight changes by introducing external or auxiliary mechanisms, such as memory-based editing \cite{mitchell2022memory,zheng2023editfactualknowledgeincontext,zhong-etal-2023-mquake} or neuron-augmented methods \cite{dong-etal-2022-calibrating,hartvigsen2023aging, huang2023transformerpatcher}. These categories together constitute the dominant methodological landscape of KE.

\textbf{From Triples to Arbitrary-Length Knowledge.} Among parameter-modifying methods, Locate-then-Edit \cite{meng2022locating} has emerged as a core paradigm, initially designed for editing isolated factual triples of the form (subject, relation, object). Early methods such as ROME and MEMIT assume that factual knowledge can be causally localized to specific layers or parameter subspaces, enabling precise single-location edits. However, subsequent work has shown that many forms of knowledge—particularly temporal, commonsense, and conceptual knowledge—are expressed as free-text, multi-token, and distributed representations. Extensions such as METO address temporal knowledge with time-aware objectives \cite{yin2023history}, while MEMITCSK \cite{gupta-etal-2023-editing} and DEM \cite{huang-etal-2024-commonsense} adapt the locate-then-edit framework to commonsense knowledge by tracing multiple tokens or dynamically selecting layers across attention and MLP components. More recently, AnyEdit \cite{jiang2025anyedit} further relaxes the single-edit assumption by introducing an autoregressive editing paradigm that performs iterative edits over segmented sequences, enabling updates to arbitrarily long and structured texts. However, it relies on a \textbf{fixed-size sliding window} for chunking. This mechanism is oblivious to the distribution of information density within the text, leading to rigid truncations at critical semantic dependencies. In contrast, our work introduces a structure-aware adaptive segmentation mechanism that identifies semantic boundaries via Bayesian Surprise, enabling segmentation at high-information transition points and reducing cross-segment interference during editing.\looseness=-1

\vspace{-8pt}
\section{Preliminaries}
\label{sec:background}
In this section, we introduce the prerequisite knowledge required to understand our method. 

\subsection{Standard Knowledge Editing}
The objective of knowledge editing is to update an LLM $f_\theta$ to incorporate a specific fact $(s, r, o)$ while maintaining the integrity of unrelated knowledge. Representative \textit{Locate-and-Edit} approaches, such as ROME~\cite{meng2022locating} and MEMIT~\cite{meng2022mass}, hypothesize that factual associations are stored within the dense Feed-Forward Networks (FFNs) of the Transformer. Specifically, these methods target the \textbf{output projection} matrix $W_{\text{out}}$ within a critical layer $l$. 

The editing process is a two-step procedure comprising \textit{Vector Optimization} and \textit{Weight Modification}.

\paragraph{Vector Optimization.} 
First, the method identifies the subject's last token index as the editing location. Within the FFN at layer $l$, the activation vector fed into $W_{\text{out}}$ corresponds to the \textbf{key vector} $k \in \mathbb{R}^{d_{\text{inter}}}$. Note that $k$ is derived from the current weights and represents the ``trigger" for the factual recall.

To inject the new object $o$, we do not solve for weights directly. Instead, we optimize a \textbf{perturbation vector} $\delta \in \mathbb{R}^{d_{\text{model}}}$ that is added to the output of $W_{\text{out}}$. This perturbation shifts the hidden state to maximize the likelihood of the target object $o$:
\[
\begin{split}
    \delta^* = \operatorname*{arg\max}_{\delta} \log P\big(o \mid f_\theta(h_{\text{out}} \leftarrow h_{\text{out}} + \delta); s, r\big),
\end{split}
\]
where $h_{\text{out}} = W_{\text{out}} k$ is the original projection output. Consequently, the target value vector for the matrix update becomes $v^* = h_{\text{out}} + \delta^*$. This $v^*$ represents the corrected output direction that the updated $W_{\text{out}}$ should map to when triggered by $k$.

\paragraph{Weight Modification.} 
With the target pair $(k, v^*)$ established, the goal is to update the weight matrix $W_{\text{out}}$ to map $k$ to $v^*$ while preserving existing knowledge. This is formulated as a constrained least-squares problem:
\[
\begin{split}
    W_{\text{out}}^* = \operatorname*{arg\min}_{W} \Big( & \| Wk - v^* \|^2 \\
    & + \lambda \mathbb{E}_{x \sim \mathcal{K}} \| W k_x - f_{\text{old}}(k_x) \|^2 \Big).
\end{split}
\]
where $\mathcal{K}$ represents the covariance statistics of keys from general text. The closed-form update is:
\[
    W_{\text{out}}^* = W_{\text{out}} + \delta^* k^T (C + k k^T)^{-1}.
\]
This paradigm is effective for short facts but faces an \textit{efficacy barrier} with long-form content. Encoding a complex narrative into a single perturbation $\delta$ often exceeds the model's capacity, leading to generation collapse~\cite{jiang2025anyedit}.

\begin{figure*}[ht]
\centering
\includegraphics[width=0.95\linewidth]{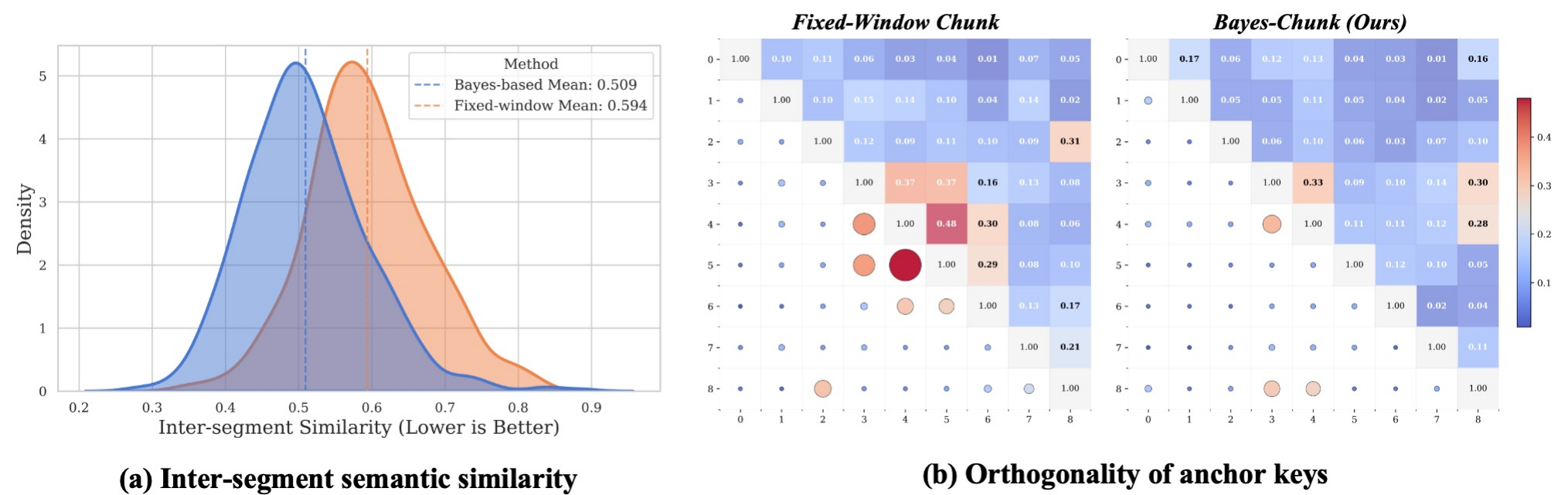}
%
\caption{To support the conclusion of \autoref{thm:structural_independence}, we demonstrate that our Bayes-Chunk achieves more \textbf{independent} segmentation with \textbf{minimal} crosstalk by evaluating similarity across two dimensions: semantic and anchor keys extracted from segments. }
\label{fig:theory-1}
\vspace{-8pt}
\end{figure*}

\subsection{Autoregressive Editing via AnyEdit}
\label{subsec:anyedit}
To resolve the single-vector bottleneck, AnyEdit~\cite{jiang2025anyedit} extends the paradigm to a sequential autoregressive process. It operates on a \textit{Divide-and-Conquer} principle, decomposing a long sequence $Y$ into shorter segments $\mathcal{C} = \{C_1, \dots, C_M\}$.

The core framework iteratively applies the standard vector optimization but distributes the load across multiple steps. This involves distinct definitions for \textbf{anchoring} (location) and \textbf{keys} (representation):

\textbf{Sequential Anchoring:} For the $t$-th segment $C_t$, AnyEdit designates the specific token position at the end of the preceding segment $C_{t-1}$ as the \textbf{anchor point}. At this token position and target layer $l$, the input vector to $W_{\text{out}}$ is extracted as the local key vector $k_t$.

\textbf{Local Perturbation Optimization:} We then optimize a local perturbation $\delta_t$ corresponding to this anchor. The objective is to find a $\delta_t$ which, when added to the FFN output at the anchor point, steers the generation of the specific segment $C_t$. Crucially, this optimization conditions on the history of previous edits:
\[
    \begin{split}
        \delta_t = \operatorname*{arg\max}_{\delta} \log P\big( & C_t \mid \mathcal{P}, C_{<t}, \Delta_{<t}, \\
        & h_{\text{out}}^{(t)} \leftarrow W_{\text{out}}k_t + \delta; \theta\big).
    \end{split}
\]
Once $\delta_t$ is optimized, we compute the target value $v_t = W_{\text{out}}k_t + \delta_t$. By iterating this process, AnyEdit constructs a dataset of pairs $\mathcal{D}_{\text{edit}} = \{(k_t, v_t)\}_{t=1}^M$ representing the logical chain. Finally, $W_{\text{out}}$ is updated in a single batch using the standard least-squares objective (e.g., MEMIT algorithm) to accommodate all localized edits simultaneously.

\textbf{Limitation.} Standard AnyEdit relies on a \textbf{Fixed-Window Chunking} strategy (e.g., slicing text every $L$ tokens). This rigid heuristic ignores the semantic topology, often placing anchor points at positions where the key vector $k_t$ is semantically weak or ambiguous. Such suboptimal anchors fail to effectively regulate the subsequent generation, a deficiency our method aims to rectify.

\section{Method: AnyEdit++}
\label{sec:method}

Building upon the previously introduced autoregressive editing framework, we present \textbf{AnyEdit++}. While the original AnyEdit validates the efficacy of sequential editing, its performance is hindered by the rigidity of Fixed-Window Chunking. AnyEdit++ retains the core \textit{Divide-and-Conquer} editing pipeline but integrates a novel segmentation mechanism: \textbf{Bayes-Chunk}. This module dynamically partitions the target text based on the model's intrinsic belief evolution rather than arbitrary length constraints.

\subsection{Bayes-Chunk: Surprise-Guided Segmentation}
\label{sec:bayes_chunk}

Bayes-Chunk seeks to align segmentation boundaries with the natural event horizons of information flow. We identify these boundaries using \textbf{Bayesian Surprise}, which quantifies the shift in the model's understanding upon processing a new token.

\subsubsection{Modeling Belief State Dynamics}
Let $\pi_t(\cdot) = P(\cdot \mid y_{<t}; \theta)$ denote the model's belief state (probability distribution over the vocabulary) at step $t$. Upon observing the token $y_t$, the belief state updates to $\pi_{t+1}$. The theoretical \textbf{Bayesian Surprise} is defined as the Kullback-Leibler (KL) divergence between the prior and posterior belief states:
\[
    D_{\text{KL}}(\pi_{t+1} \,\|\, \pi_t) = \sum_{w \in \mathcal{V}} P(w|y_{\le t}) \log \frac{P(w|y_{\le t})}{P(w|y_{<t})}.
\]
In autoregressive language modeling, this divergence is heavily dominated by the probability assigned to the actual observed token. Therefore, we operationalize this metric using the \textbf{Information Surprisal} (or Information Content), which serves as a computationally efficient proxy for the instantaneous surprise elicited by token $y_t$:
\[
    \label{eq:surprisal_log}
    \mathcal{S}(y_t) \approx -\log P(y_t \mid y_{<t}; \theta).
\]
A high value of $\mathcal{S}(y_t)$ indicates that $y_t$ conveys high information content, such as a plot twist in a narrative or a logical leap in reasoning, forcing a significant reorganization of the model's hidden representation.

\subsubsection{Adaptive Boundary Detection}
We posit that semantic boundaries correspond to local maxima in this surprise spectrum. To partition the text sequence $Y$ into $M$ cohesive units, we compute the surprise score for all tokens and identify the indices $\mathcal{B} = \{b_1, \dots, b_M\}$ corresponding to the top-$M$ local peaks:
\[
    \mathcal{B} = \operatorname{Sort}_{\text{asc}} \left( \operatorname*{argtopk}_{t \in [1, T]} \{ \mathcal{S}(y_t) \} \right).
\]
The text is then segmented into chunks $\mathcal{C} = \{C_1, \dots, C_M\}$, where the $j$-th chunk spans from the high-surprise token $y_{b_j}$ to $y_{b_{j+1}-1}$. This ensures that each segment encapsulates a complete semantic transition.

\begin{figure*}[ht]
\centering
\includegraphics[width=0.95\linewidth]{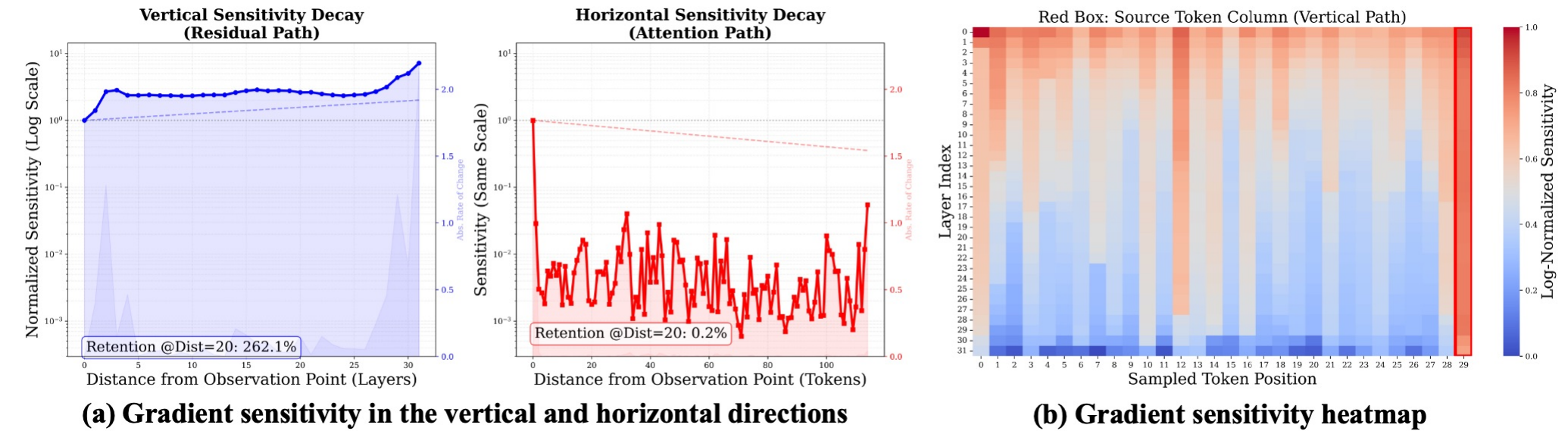}
\caption{To provide more intuitive evidence supporting the view in \autoref{thm:causal_locality} regarding gradient sensitivity's \textbf{horizontal} and \textbf{vertical} propagation within LLMs, we present sensitivity variation patterns and heatmaps for a selected sample across both the token-wise and layer-wise dimensions within the model.}
\label{fig:theory-2}
\vspace{-16pt}
\end{figure*}

\subsection{Unified Editing with Optimized Anchors}

With the semantic boundaries established via Bayes-Chunk, AnyEdit++ seamlessly executes the standard sequential editing protocol described in \autoref{subsec:anyedit}. Recall that the AnyEdit framework utilizes the hidden state immediately preceding a segment as the anchor. In our context, since each chunk $C_j$ begins at a high-surprise token $y_{b_j}$, the anchor is naturally assigned to the position $b_j - 1$.

Consequently, the subsequent operations \textit{Vector Optimization} for accumulating perturbations and the final \textit{Weight Modification} remain mathematically consistent with the baseline. The distinction lies solely in the input topology: by strictly following the AnyEdit pipeline on Bayes-Chunked data, we ensure that perturbations are optimized to generate cohesive semantic units rather than arbitrary fragments, while the underlying update mechanism remains unchanged.

\section{Theoretical Analysis}
\label{sec:theory}

In order to ground the effectiveness of AnyEdit++ in formal principles, we develop a rigorous theoretical framework addressing two core dimensions of long-form knowledge editing: \textbf{update stability} and \textbf{control efficiency}. Specifically, our analysis aims to resolve two pivotal challenges: (i) the mitigation of \textit{crosstalk} interference across iterative edit steps to ensure the persistence of prior knowledge segments; and (ii) the identification of optimal injection sites that yield maximum causal influence on the target trajectory.

\subsection{Benefits from Structural Independence}
\label{sec:theory_efficiency}

We analyze the stability of the closed-form solution when aggregating updates from multiple segments. A critical challenge in mass-editing is the \textit{crosstalk} interference, where the update for segment $t$ negatively impacts the preservation of segment $j$.

\begin{theorem}[Crosstalk Bound]
\label{thm:structural_independence}
Under the least-squares objective for knowledge editing, the reconstruction error (crosstalk) for a specific target segment $j$ induced by other segments is bounded by the sum of pairwise interactions between their anchor keys:
\[
\| \mathcal{E}_{cross}^{(j)} \|_2 \leq \sum_{t \neq j} \| \delta_t \|_2 \cdot \alpha_{tj}, \quad \text{where } \alpha_{tj} \triangleq | k_t^T A k_j |.
\]
Here, $A$ is the precision matrix of the pre-training statistics. Crucially, the interaction term is proportional to the cosine similarity: $\alpha_{tj} \propto \text{CosSim}(k_t, k_j)$.
\end{theorem}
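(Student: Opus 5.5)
The plan is to write down the closed-form batch update that AnyEdit++ inherits from MEMIT and read the crosstalk directly off it. With edit pairs $\{(k_t,v_t)\}_{t=1}^M$, residuals $\delta_t = v_t - W_{\text{out}}k_t$, and $K=[k_1,\dots,k_M]$, $\Delta=[\delta_1,\dots,\delta_M]$, the least-squares objective of the preliminaries gives
\[
\Delta W = \Delta K^T (C + KK^T)^{-1}.
\]
I will define the precision matrix as $A \triangleq (C+KK^T)^{-1}$. This is symmetric positive definite whenever $C \succ 0$. If one prefers the pre-training statistic alone, $A=C^{-1}$, the same computation applies after dropping the $KK^T$ term, which is the regime $C \gg KK^T$.

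The first step is to evaluate the updated layer at the anchor of segment $j$:
\[
\Delta W k_j = \sum_{t=1}^M \delta_t\, (k_t^T A k_j).
\]
This expansion naturally splits into a self term $\delta_j (k_j^T A k_j)$, which is the intended edit for segment $j$, and the remaining sum over $t\neq j$. I define that remainder as the crosstalk,
\[
\mathcal{E}_{cross}^{(j)} \triangleq \sum_{t\neq j} \delta_t\, (k_t^T A k_j).
\]
The triangle inequality together with homogeneity of the norm then gives $\|\mathcal{E}_{cross}^{(j)}\|_2 \le \sum_{t\neq j}\|\delta_t\|_2\,|k_t^T A k_j|$, which is exactly the claimed bound with $\alpha_{tj}$.

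The second step handles the cosine-similarity statement. Since $A \succ 0$, it defines an inner product $\langle x,y\rangle_A = x^T A y$. Writing $\alpha_{tj} = \|k_t\|_A \|k_j\|_A \,|\mathrm{CosSim}_A(k_t,k_j)|$ shows that, for fixed key norms, $\alpha_{tj}$ is proportional to the cosine similarity in the $A$-geometry. Under whitened or isotropic statistics, $A \approx cI$, this reduces to the Euclidean cosine similarity stated in the theorem. In general, I would bound it instead by $\lambda_{\max}(A)\|k_t\|\|k_j\|\,|\mathrm{CosSim}(k_t,k_j)|$ only after whitening the keys, i.e.\ working with $\tilde k = A^{1/2}k$. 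Consequently, orthogonal keys ($\alpha_{tj}=0$) eliminate crosstalk.

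The main obstacle is definitional rather than technical. The inequality itself is one line, so the substantive work lies in justifying that the off-diagonal terms are the right notion of "crosstalk". The self term is not exactly $\delta_j$: it carries the factor $k_j^TAk_j$, and with the $KK^T$ term included in $A$ the self-reconstruction already accounts for the coupling. I would address this by stating the precision-matrix convention explicitly, and I would also be honest that "$\propto$ CosSim" holds exactly only in the $A$-inner product, or under an isotropy assumption on $C$.
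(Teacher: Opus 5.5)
Your proposal is correct and follows essentially the same route as the paper. You derive the ridge closed form $\Delta W^* = DK^TA$ with $A=(\lambda C+KK^T)^{-1}$, evaluate at $k_j$ to get $\sum_t \delta_t (k_t^TAk_j)$, split off the $t=j$ signal term, and apply the triangle inequality. Your treatment of the ``$\propto$ CosSim'' claim is more precise than the paper's bare appeal to $A$ being ``well-conditioned'': it is exact in the $A$-inner product and Euclidean only under isotropy. One caution: your $\lambda_{\max}(A)$ bound is valid only with the cosine of the whitened keys $A^{1/2}k$. With the raw-key cosine it fails, since Euclidean-orthogonal keys can have $k_t^TAk_j\neq 0$.
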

Detailed proof is provided in \autoref{app:proof_independence}. The theorem implies that to minimize interference ($\mathcal{E}_{cross} \to 0$), the set of anchor keys $\{k_t\}$ must be as \textbf{orthogonal} as possible. If two adjacent chunks define keys that are structurally similar (high cosine similarity), the solver cannot distinguish between them, leading to destructive interference (\textit{overwriting} rather than \textit{updating}) and resulting in poor editing quality.

To empirically validate that Bayes-Chunk fulfills the orthogonality requirement suggested by \autoref{thm:structural_independence}, we evaluate the independence of segments across both semantic and feature spaces on the \textit{EditEverything} dataset. As illustrated in \autoref{fig:theory-1} (a), we first measure the global semantic independence by calculating the average inter-segment cosine similarity using \texttt{Qwen3-Embedding-0.6B}. The Kernel Density Estimation (KDE) plot reveals that Bayes-Chunk significantly shifts the similarity distribution towards lower values (mean \textbf{0.509}) compared to the Fixed-Window approach (mean \textbf{0.594}), indicating that our method identifies more semantically distinct boundaries. Furthermore, we visualize the feature-level correlation of the actual anchor keys $k_t$ extracted from the target layers of \texttt{Llama-3.1-8B-Instruct}. The heatmaps in \autoref{fig:theory-1} (b) demonstrate that Bayes-Chunk yields substantially lower pairwise similarity between keys, whereas Fixed-Window exhibits high redundancy between adjacent segments. These results provide direct evidence that our adaptive partitioning minimizes the interaction term $\alpha_{tj}$, thereby effectively suppressing crosstalk to ensure stable and non-destructive knowledge injection (see \autoref{app:vis_segment} for detailed experimental setups).

\begin{table*}[htbp]
\centering
\caption{We evaluated the performance of various models and different approaches on knowledge editing using \textbf{EditEverything}. We use \textbf{bold} text to indicate optimal results, \ul{underline} to denote suboptimal results, and highlight
  improvements exceeding 5\% in \textcolor{red}{red}}
\label{tab:main_results}

\setlength{\aboverulesep}{0pt}
\setlength{\belowrulesep}{0pt}
\resizebox{0.95\linewidth}{!}{%
\begin{tabular}{clcccccccccccccc|cc}
\toprule[0.5mm]
\multirow{3}{*}{\parbox{1.5cm}{\centering \textbf{LLM}}} & \multirow{3}{*}{\textbf{Method}} & \multicolumn{14}{c}{\textbf{EditEverything}} & \multicolumn{2}{c}{\textbf{Average}} \\
\cmidrule(lr){3-16} \cmidrule(lr){17-18}
 & & \multicolumn{2}{c}{Math} & \multicolumn{2}{c}{Code} & \multicolumn{2}{c}{Physics} & \multicolumn{2}{c}{Chemistry} & \multicolumn{2}{c}{Biology} & \multicolumn{2}{c}{News} & \multicolumn{2}{c}{Poetry} & \multicolumn{2}{c}{\textbf{Total}} \\
\cmidrule(lr){3-4} \cmidrule(lr){5-6} \cmidrule(lr){7-8} \cmidrule(lr){9-10} \cmidrule(lr){11-12} \cmidrule(lr){13-14} \cmidrule(lr){15-16} \cmidrule(lr){17-18}
 & & BLEU & BS & BLEU & BS & BLEU & BS & BLEU & BS & BLEU & BS & BLEU & BS & BLEU & BS & BLEU & BS \\
\midrule

\multirow{4}{*}{\parbox{1.8cm}{\centering \textsc{Llama-3.1} \\ 8B-Instruct }} 
 & MEMIT & 41.06 & 89.06 & 35.06 & 77.26 & 57.48 & 88.94 & 56.74 & 91.59 & 55.63 & 87.87 & 25.11 & 74.74 & 27.21 & 69.70 & 42.61 & 82.74 \\
 & AlphaEdit & 38.89 & 88.28 & 27.09 & 75.03 & 54.00 & 87.85 & 52.05 & 90.66 & 57.27 & 88.58 & 26.13 & 74.46 & 27.66 & 68.52 & 40.44 & 81.91 \\
 & AnyEdit & \ul{74.19} & \ul{95.84} & \ul{81.20} & \ul{95.75} & \ul{79.25} & \ul{96.73} & \ul{81.35} & \textbf{97.46} & \ul{78.64} & \ul{96.37} & \ul{64.81} & \ul{91.69} & \ul{49.07} & \textbf{85.80} & \ul{72.64} & \ul{94.23} \\
 \rowcolor{cyan!10} \cellcolor{white} & \textbf{Ours} & \textbf{77.74} & \textbf{96.49} & \textbf{\textcolor{red}{86.48}} & \textbf{97.27} & \textbf{80.24} & \textbf{97.25} & \textbf{81.49} & \ul{97.16} & \textbf{80.35} & \textbf{96.86} & \textbf{68.45} & \textbf{91.70} & \textbf{50.21} & \ul{84.78} & \textbf{75.00} & \textbf{94.50} \\
\midrule

\multirow{4}{*}{\parbox{1.8cm}{\centering \textsc{Llama-2} \\ 7B}} 
 & MEMIT & 26.36 & 72.90 & 20.77 & 72.97 & 33.20 & 77.33 & 30.39 & 81.21 & 39.28 & 74.11 & 28.50 & 62.71 & 8.01 & 35.86 & 26.64 & 68.16 \\
 & AlphaEdit & 29.70 & 73.91 & 10.95 & 62.26 & 35.97 & 80.11 & 29.92 & 78.67 & 39.11 & 77.06 & 26.74 & 74.38 & 9.27 & 48.57 & 25.95 & 70.71 \\
 & AnyEdit & \ul{43.05} & \ul{88.23} & \ul{50.78} & \ul{88.62} & \ul{49.20} & \ul{90.37} & \ul{47.00} & \ul{91.77} & \ul{51.82} & \ul{90.32} & \ul{34.34} & \ul{84.03} & \ul{19.94} & \ul{70.99} & \ul{42.30} & \ul{86.33} \\
 \rowcolor{cyan!10} \cellcolor{white} & \textbf{Ours} & \textbf{47.31} & \textbf{88.52} & \textbf{\textcolor{red}{70.50}} & \textbf{92.80} & \textbf{53.83} & \textbf{90.44} & \textbf{\textcolor{red}{52.00}} & \textbf{92.65} & \textbf{55.39} & \textbf{90.51} & \textbf{\textcolor{red}{44.85}} & \textbf{84.91} & \textbf{27.04} & \textbf{73.77} & \textbf{\textcolor{red}{50.13}} & \textbf{87.66} \\
\midrule

\multirow{4}{*}{\parbox{1.8cm}{\centering \textsc{Qwen-2.5} \\ 7B-Instruct}} 
 & MEMIT & 75.91 & 95.18 & 51.84 & 81.49 & 64.28 & 91.37 & 67.10 & 94.44 & 64.79 & 91.37 & 45.13 & 79.01 & 40.22 & 71.96 & 58.47 & 86.40 \\
 & AlphaEdit & 81.18 & 95.76 & 60.74 & 83.42 & 66.87 & 91.51 & 72.57 & 95.46 & 65.70 & 91.79 & 50.17 & 82.09 & 39.78 & 71.17 & 62.43 & 87.31 \\
 & AnyEdit & \ul{89.07} & \ul{98.09} & \ul{83.34} & \ul{95.69} & \ul{87.29} & \ul{98.04} & \ul{90.46} & \textbf{98.54} & \textbf{85.82} & \textbf{97.04} & \textbf{80.09} & \ul{94.79} & \ul{56.63} & \ul{84.77} & \ul{81.81} & \ul{95.28} \\
 \rowcolor{cyan!10} \cellcolor{white} & \textbf{Ours} & \textbf{92.80} & \textbf{98.79} & \textbf{\textcolor{red}{91.56}} & \textbf{98.41} & \textbf{87.91} & \textbf{98.07} & \textbf{91.61} & \ul{98.10} & \ul{85.29} & \ul{96.67} & \ul{79.97} & \textbf{95.30} & \textbf{\textcolor{red}{68.14}} & \textbf{88.67} & \textbf{85.33} & \textbf{96.29} \\
\bottomrule[0.5mm]
\end{tabular}%
}
\vspace{-12pt}

\end{table*}

\subsection{Causal Locality in Editing}
\label{sec:theory_locality}

We first address the fundamental question of \textit{where} to inject the knowledge edits. While sequence models attend to the entire context window, we posit that the direct predecessor state ($h_{t-1}$) offers a strictly superior control channel compared to distant history ($h_{t-k}$).

\begin{theorem}[Principle of Causal Locality]
\label{thm:causal_locality}
Let $\kappa(i \to t) \triangleq \| \nabla_{h_i} \mathcal{L}(y_t) \|_2$ denote the \textit{Positional Controllability}, representing the sensitivity of the target prediction $y_t$ to the hidden state at position $i$. For a high-surprisal target, assuming a standard residual network architecture, the controllability decreases significantly as the distance $k$ increases. Specifically, the relative control gain is strictly positive:
\[
    \Delta \kappa_k \triangleq \kappa(t-1 \to t) - \kappa(t-k \to t) > 0, \quad \forall k > 1.
\]
\end{theorem}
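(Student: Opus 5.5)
We argue by decomposing the gradient $\nabla_{h_i}\mathcal{L}(y_t)$ according to the two routes by which the hidden state at position $i$ can reach the prediction of $y_t$ in a causal Transformer with residual connections. Work at the editing layer $l$ and write $h_i = h_i^{(l)}$. The prediction of $y_t$ is read off $h_{t-1}^{(L)}$, the final-layer state at position $t-1$. For $i = t-1$ there are two kinds of paths. The \emph{vertical} path is the residual stream $h_{t-1}^{(l)} \to h_{t-1}^{(l+1)} \to \dots \to h_{t-1}^{(L)}$, whose Jacobian is $\prod_{m}(I + J^{\text{blk}}_{m})$. The remaining paths are \emph{horizontal}, passing through attention from position $t-1$ to itself. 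For $i = t-k$ with $k>1$, every path must cross positions at least once through an attention edge, since the residual stream of position $t-k$ never reaches $h_{t-1}^{(L)}$ directly.

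The plan has three steps.

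\textbf{Step 1: chain rule.} Write
\[
\nabla_{h_i}\mathcal{L} = \sum_{\text{paths } p:\, i\to t-1} \Big(\prod_{e\in p} J_e\Big)^{T} g,
\]
where $g = \nabla_{h_{t-1}^{(L)}}\mathcal{L}(y_t)$. Split the sum into the identity-dominated residual term $R$, which is present only when $i = t-1$, and the attention terms $A_i$.

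\textbf{Step 2: bound the attention terms.} The Jacobian of attention output at query $t-1$ with respect to the value and key at position $i$ scales with the attention weight $a_{t-1,i}$. This gives
\[
\|A_i\| \le c\, a_{t-1,i}\,\|W_V\|\,\|g\|,
\]
plus the softmax-derivative contributions through keys, which are also controlled by $a_{t-1,i}$. Summing over layers above $l$ yields
\[
\|A_i\| \le \epsilon_i \|g\|, \qquad \epsilon_i = O\Big(\sum_m a^{(m)}_{t-1,i}\Big).
\]

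\textbf{Step 3: lower-bound the residual term and compare.} For $i = t-1$, use the assumption that the block Jacobians are small perturbations of the identity, $\|J^{\text{blk}}_m\| \le \eta < 1$ per layer. This gives
\[
\|R\| \ge (1-\eta)^{L-l}\,\|g\|.
\]
It then follows that
\[
\kappa(t-1\to t) \ge \big((1-\eta)^{L-l} - \epsilon_{t-1}\big)\|g\| \quad\text{and}\quad \kappa(t-k\to t) \le \epsilon_{t-k}\|g\|.
\]
Therefore $\Delta\kappa_k > 0$ whenever
\[
(1-\eta)^{L-l} > \epsilon_{t-1} + \epsilon_{t-k}.
\]

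We use the high-surprisal hypothesis here. At a surprisal peak $\|g\| > 0$ is large, because $1 - p(y_t) \approx 1$ and the logit gradient is bounded away from zero. This ensures the inequality is strict rather than $0 > 0$. It also justifies the regime in which the residual direction carries the correction.

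\textbf{Main obstacle.} The hardest part is controlling $\epsilon_{t-k}$ uniformly in $k$. Attention can concentrate on a distant token, for example an attention sink at position $0$. The statement as written, for all $k>1$, is therefore not a theorem without an explicit assumption. We will add a hypothesis that the residual gain dominates the total attention mass routed through any single distant position, i.e.\ the displayed condition above. The first thing to try is bounding $\epsilon_{t-k}$ by $\max_m \max_{j\neq t-1} a^{(m)}_{t-1,j}$ times $\prod_m\|I+J_m\|$, and assuming this is small relative to $(1-\eta)^{L-l}$. \autoref{fig:theory-2} then serves as the empirical check that this condition holds in practice.
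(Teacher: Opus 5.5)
Your route is the paper's own. Your $g$ is its $G_{\text{final}}$, your bound $(1-\eta)^{L-l}\|g\|$ quantifies its quasi-isometry $J_{t-1\to t-1}\approx I$, and your $\epsilon_i$ is its damping factor $\gamma$ from $J_{t-k\to t-1}\approx A_{t-1,t-k}W_V$. You are right that an explicit hypothesis is needed: the paper only asserts $A_{t-1,t-k}\ll 1$ ``typically'' and ends with $\Delta\kappa_k\approx(1-\gamma)\|G_{\text{final}}\|_2$. However, your Step 2 fails as stated.

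The bound $\|A_i\|\le\epsilon_i\|g\|$ with $\epsilon_i=O(\sum_m a^{(m)}_{t-1,i})$ covers only paths whose cross-position hop goes directly from $i$ into $t-1$. A path can instead hop from $t-k$ into an intermediate $j$ at layer $m_1$, then from $j$ into $t-1$ at a later layer $m_2$. Such a path contributes on the order of $a^{(m_2)}_{t-1,j}\,a^{(m_1)}_{j,t-k}$. If every direct weight $a^{(m)}_{t-1,t-k}$ is negligible but $t-k$ is relayed through $t-2$, then $\epsilon_{t-k}\approx 0$ while $\kappa(t-k\to t)$ is not. The paper's one-hop approximation has the same omission, and a hypothesis on direct attention weights does not repair it.

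The missing idea is to bound the off-diagonal blocks by the same smallness of the residual branches that you already use on the diagonal, not by attention weights. Write the layer-$m$ Jacobian over the whole sequence as $I+E_m$. Use the block row-sum norm $\|M\|_{\mathrm b}=\max_i\sum_j\|M_{ij}\|_2$, which is submultiplicative. Because attention is row-stochastic, the value-path blocks $a^{(m)}_{ij}W_OW_V(\cdot)$ contribute at most $\|W_OW_V\|$ (times a LayerNorm factor) to each row, wherever the attention mass sits. The key and query terms are bounded similarly. So $\|E_m\|_{\mathrm b}\le\eta_m$ is a faithful form of the paper's small-Lipschitz assumption on the sub-layers. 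Then $(I+E_{L-1})\cdots(I+E_l)=I+R$ with $\|R\|_{\mathrm b}\le\rho:=\prod_m(1+\eta_m)-1$. Since $I$ has no off-diagonal blocks,
\[
\Delta\kappa_k\;\ge\;\bigl(1-\|R_{t-1,t-1}\|_2-\|R_{t-1,t-k}\|_2\bigr)\|g\|_2\;\ge\;(1-\rho)\,\|g\|_2
\]
holds for every $k>1$ at once. This has three consequences:
\begin{itemize}
\item It sums all multi-hop paths.
\item It makes attention sinks harmless, since a sink costs at most $\eta_m$ per layer like any other attended position.
\item It removes your $\epsilon_{t-1}$, which should not be subtracted anyway. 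By causal masking, every path from $h^{(l)}_{t-1}$ to $h^{(L)}_{t-1}$ stays at position $t-1$, so self-attention is part of the diagonal block.
\end{itemize}
The hypothesis to add is therefore $\prod_m(1+\eta_m)<2$, not one on attention mass.

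Strictness also needs $g\neq 0$, and high surprisal does not supply it. $\|p-e_{y_t}\|_2\ge 1-p_{y_t}$ is large, but $g$ is its image under $W_u^{\top}$ (composed with the final LayerNorm Jacobian), and $W_u^{\top}$ has a kernel of dimension at least $|\mathcal{V}|-d$. The paper's inequality $\|G_{\text{final}}\|_2\ge\sigma_{\min}(W_u)\|p-e_{y_t}\|_2$ holds only for vectors in the column space of $W_u$. So $g\neq 0$ must be assumed explicitly.
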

The proof (detailed in \autoref{app:proof_locality}) relies on decomposing the gradient flow into \textbf{vertical} and \textbf{horizontal} components. The vertical propagation through the residual stream at $t-1$ acts as a \textit{quasi-isometry}, preserving the magnitude of the error signal. In contrast, the horizontal propagation to $t-k$ must traverse the attention mechanism, which acts as a bottleneck by distributing the signal weight across the context window (where typically $A_{t-1, t-k} \ll 1$).

To empirically verify the \textit{vertical vs. horizontal} gradient flow dynamics described in \autoref{thm:causal_locality}, we conduct a sensitivity analysis on \texttt{Llama-3.1-8B-Instruct} by back propagating the loss from the last token’s hidden state. As illustrated in \autoref{fig:theory-2} (a), the gradient sensitivity across model layers (left subplot) remains remarkably stable, exhibiting a quasi-isometry property that preserves signal magnitude throughout the residual stream. In contrast, the sensitivity exhibits a sharp exponential decay as the relative distance from the target token increases (right subplot). This trend is further consolidated by the layer-token sensitivity heatmap (\autoref{fig:theory-2} (b), where high controllability is strictly localized within the most recent token positions across all layers. These observations confirm that $h_{t-1}$ serves as the optimal control channel, as distant tokens suffer from severe horizontal signal attenuation through the attention layers. (Detailed methodology is provided in \autoref{app:causal_locality_study}).

\vspace{-6pt}

\section{Experimental Results}

\subsection{Results of the comparative experiment}

In this section, we primarily present the comparison results between AnyEdit++ and baseline methods on a wide range of knowledge editing datasets.

\textbf{Baseline Methods}: The baseline methods we used for comparison include traditional knowledge editing approaches designed for triples, as well as methods specifically engineered for diversifying knowledge in long-form knowledge editing. For the traditional knowledge editing methods, we selected \textbf{MEMIT} \cite{meng2022mass} and \textbf{AlphaEdit} \cite{fang2024alphaedit}. MEMIT is a classic and widely applicable method in the field of knowledge editing, while AlphaEdit represents a recent key state-of-the-art approach. For long-form text knowledge diversification, we chose \textbf{AnyEdit} \cite{jiang2025anyedit} as the primary comparison target.

\textbf{Datasets}: To comprehensively evaluate AnyEdit++ across a spectrum of semantic complexity, ranging from atomic fact updates to long-form structural reasoning, we utilize three distinct benchmarks. 
We employ \textbf{CounterFact}~\cite{meng2022locating} and \textbf{UnKE}~\cite{deng2024unke} as standard baselines to assess efficacy on atomic facts and unstructured question-answering. To rigorously test coherency in complex long-context generation, we utilize \textbf{EditEverything}~\cite{jiang2025anyedit}, which encompasses diverse tasks across seven domains (e.g., Math, Code). This combination verifies that our approach maintains high precision on fundamental editing tasks while significantly outperforming baselines in scenarios requiring logical retention and structural consistency. Detailed statistics and descriptions are provided in \autoref{app:datasets}.

\begin{figure*}[ht]
\centering
\includegraphics[width=0.95\linewidth]{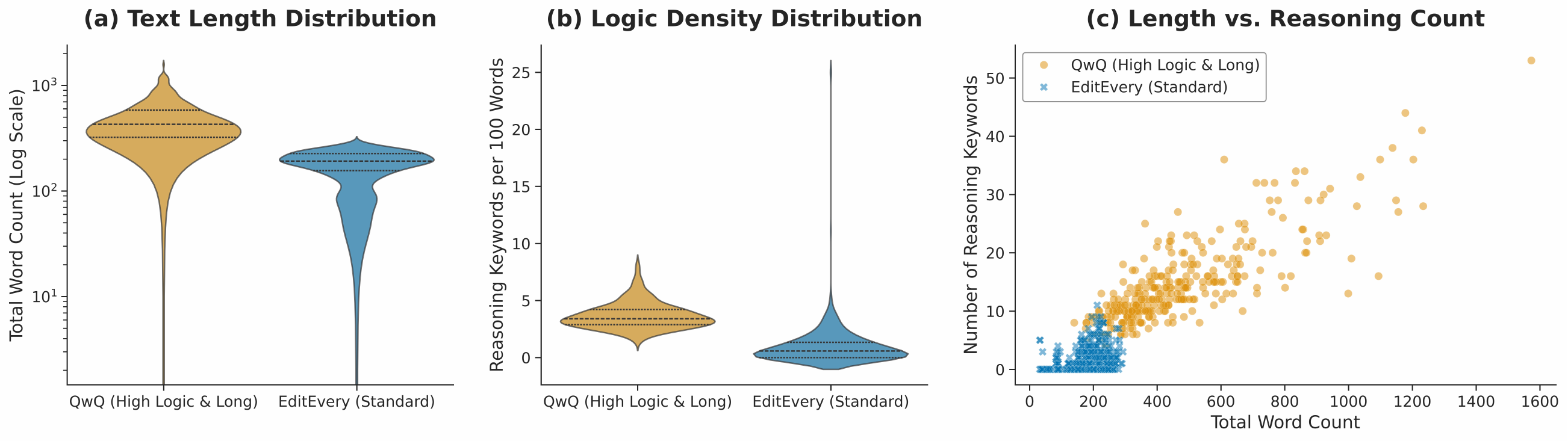}
\vspace{-4pt}
\caption{Distribution Differences Between the \textbf{EditEverything} and Our \textbf{QwQ-Edit} in Sample Length and Logical Density}
\label{fig:datacmp}
\end{figure*}

\begin{figure*}[h]
\centering
\includegraphics[width=0.99\linewidth]{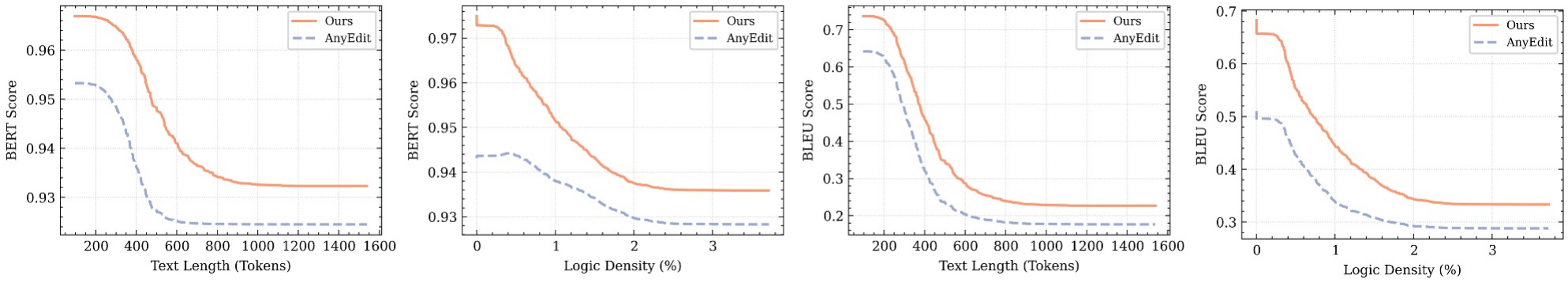}
\vspace{-0.2cm}
\caption{We evaluated the AnyEdit method and our approach on QwQ-Edit, reporting the trends in BLEU and BERT Score metrics through fine-grained grouping based on text length and logical density within the dataset.}
\label{fig:qwq}
\vspace{-8pt}
\end{figure*}

\begin{table}[htbp]
\centering
\vspace{-2pt}
\caption{Performance comparison on Reference Benchmarks.}
\label{tab:llama3_ref_results}
\resizebox{0.9\linewidth}{!}{%
\begin{tabular}{lcccccc}
\toprule[0.5mm]
\multirow{2}{*}{\textbf{Method}} & \multicolumn{2}{c}{UnKE} & \multicolumn{2}{c}{CounterFact} & \multicolumn{2}{c}{\textbf{Average}} \\
\cmidrule(lr){2-3} \cmidrule(lr){4-5} \cmidrule(lr){6-7}
 & BLEU & BS & BLEU & BS & BLEU & BS \\
\midrule
MEMIT & 24.76 & 76.50 & 32.21 & 75.79 & 28.49 & 76.15 \\
AlphaEdit & 21.34 & 73.86 & 23.51 & 72.42 & 22.43 & 73.14 \\
AnyEdit & 79.02 & 95.88 & 86.27 & 97.85 & 82.65 & 96.87 \\
 \rowcolor{cyan!10}\textbf{Ours} & \textbf{81.57} & \textbf{96.03} & \textbf{90.69} & \textbf{98.29} & \textbf{86.13} & \textbf{97.16} \\
\bottomrule[0.5mm]
\end{tabular}%
}
\end{table}
\vspace{-4pt}

\textbf{Evaluation Metric}: To evaluate the strengths and weaknesses of different methods from a more comprehensive perspective, we selected the BLEU and BERT Score (based on all-MiniLM-L6-v2) as our primary observation metrics. Specifically, the BLEU calculation process favors exact matching, enabling assessment of the accuracy of edited model responses relative to standard answers. In contrast, BERT Score focuses more on the semantic similarity between model responses and standard answers. Selecting these two metrics ensures a comprehensive understanding of the edited model responses' strengths and weaknesses, covering both key information matching and semantic levels. \looseness=-1

As shown in \autoref{tab:main_results}, we conducted comprehensive comparative experiments on EditEverything and published results across all subcategories to facilitate observation of trends across different classifications (Since both our approach and AnyEdit are plug-and-play methods, for fairness, we both specified the \textit{MEMIT} algorithm as the base algorithm for integration. For more detailed experimental settings, please refer to \autoref{app:setup}). It is evident that we achieved the highest average metrics for both BLEU and BERT Score across the three LLM models. Compared to AnyEdit, our approach demonstrated significant gains in BLEU scores: a 2.36\% improvement on \texttt{Llama-3.1-8B-Instruct}, a 3.52\% increase on \texttt{Qwen-2.5-7B-Instruct}, and nearly an 8\% boost on \texttt{Llama-2-7B}. Regarding BERT Score, AnyEdit already achieved solid results using a fixed-window strategy (e.g., 94.5\%, 96.29\%). Nevertheless, our method continues to show an upward trend, achieving improvements of 0.27\%, 1.33\%, and 1.01\% across the three LLMs, respectively. This demonstrates that our approach consistently outperforms AnyEdit across diverse editing data types and generally longer editing lengths, while requiring only the additional computation of Bayesian Surprise values for edited data.

Specifically, we observed an additional phenomenon: our approach achieved significant improvements under the Math and Code categories. For instance, in the Code classification task, our method outperformed the AnyEdit approach by nearly 20\% on \texttt{Llama-2-7B}, while also achieving 5\% and 8\% gains on the other two models, respectively. This observation suggests that segmentation based on Bayes-Chunk appears to yield greater advantages on data with strong logical structure and longer lengths. We will discuss this phenomenon in detail in \autoref{sec:long}.

To ensure a comprehensive evaluation, we conducted assessments not only on the EditEverything dataset, which is specifically designed for long-form knowledge editing with diverse content, but also on the more traditional UnKE and CounterFact datasets. As shown in \autoref{tab:llama3_ref_results}, using \texttt{Llama-3.1-8B-Instruct} as the foundational LLM for editing, our method clearly achieves the best performance on both metrics across different datasets. Compared to the AnyEdit method, it improves BLEU by nearly 4\% on average across both datasets while consistently enhancing the BERT Score metric. This result further demonstrates the adaptability of our Bayes-Chunk approach across diverse datasets.

\begin{table*}[htbp]
\centering
\caption{Editing results of FT-UKE and FT-UKE with Bayes segmentation. The light-blue background indicates results with Bayes segmentation, and bold values indicate improvements over the original FT-UKE based on unrounded scores.}
\label{tab:ft_uke_results}
\resizebox{0.95\linewidth}{!}{%
\begin{tabular}{clcccccccccccccc|cc}
\toprule[0.5mm]
\multirow{3}{*}{\parbox{1.5cm}{\centering \textbf{LLM}}} & \multirow{3}{*}{\textbf{Method}} & \multicolumn{14}{c}{\textbf{EditEverything}} & \multicolumn{2}{c}{\textbf{Average}} \\
\cmidrule(lr){3-16} \cmidrule(lr){17-18}
 & & \multicolumn{2}{c}{Math} & \multicolumn{2}{c}{Code} & \multicolumn{2}{c}{Physics} & \multicolumn{2}{c}{Chemistry} & \multicolumn{2}{c}{Biology} & \multicolumn{2}{c}{News} & \multicolumn{2}{c}{Poetry} & \multicolumn{2}{c}{\textbf{Total}} \\
\cmidrule(lr){3-4} \cmidrule(lr){5-6} \cmidrule(lr){7-8} \cmidrule(lr){9-10} \cmidrule(lr){11-12} \cmidrule(lr){13-14} \cmidrule(lr){15-16} \cmidrule(lr){17-18}
 & & BLEU & BS & BLEU & BS & BLEU & BS & BLEU & BS & BLEU & BS & BLEU & BS & BLEU & BS & BLEU & BS \\
\midrule
 & FT-UKE & 99.89 & 99.97 & 99.87 & 100.00 & 99.83 & 99.99 & 99.98 & 100.00 & 99.98 & 99.99 & 100.00 & 100.00 & 100.00 & 100.00 & 99.90 & 99.99 \\
\rowcolor{cyan!10}
 \cellcolor{white}\multirow{-2}{*}{\parbox{2.0cm}{\centering \textsc{Llama-3.1} \\ 8B-Instruct}} & FT-UKE + Bayes & \textbf{99.97} & \textbf{99.98} & \textbf{99.88} & 100.00 & \textbf{99.91} & 99.99 & \textbf{99.99} & 100.00 & \textbf{99.99} & \textbf{100.00} & 100.00 & 100.00 & 100.00 & 100.00 & \textbf{99.95} & \textbf{99.99} \\
\midrule
 & FT-UKE & 98.84 & 99.74 & 99.97 & 100.00 & 99.46 & 99.96 & 99.78 & 99.97 & 99.73 & 99.94 & 99.36 & 99.95 & 100.00 & 100.00 & 99.52 & 99.93 \\
\rowcolor{cyan!10}
 \cellcolor{white}\multirow{-2}{*}{\parbox{2.0cm}{\centering \textsc{Qwen-2.5} \\ 7B-Instruct}} & FT-UKE + Bayes & \textbf{99.76} & \textbf{99.96} & 99.79 & 99.97 & \textbf{99.51} & \textbf{99.97} & 99.05 & 99.91 & \textbf{99.93} & \textbf{99.98} & \textbf{99.99} & \textbf{100.00} & 98.60 & 99.90 & \textbf{99.57} & \textbf{99.96} \\
\bottomrule[0.5mm]
\end{tabular}%
}
\end{table*}
\begin{figure*}[t]
    \centering
    \small
    \definecolor{chunkA}{RGB}{235, 245, 255} 
    \definecolor{chunkB}{RGB}{255, 240, 230} 
    \definecolor{badcut}{RGB}{200, 0, 0}     
    \definecolor{goodcut}{RGB}{0, 150, 0}    

    \newcommand{\cboxA}[1]{\colorbox{chunkA}{\strut #1}}
    \newcommand{\cboxB}[1]{\colorbox{chunkB}{\strut #1}}
    \newcommand{\cutmark}{\textcolor{gray}{\textbf{|}}} 

    \begin{tcolorbox}[colback=white, colframe=gray!30, title=\textbf{Case 1: Syntactic Dependencies (Sample ID 1)}]
        \textbf{Fixed-Window:} ...since 6 is the \cboxA{product of} \cutmark \cboxB{\textcolor{badcut}{\textbf{2}} and 3. The largest...} \\
        \textit{(Analysis: Splitting inside a prepositional phrase "product of ... 2" creates high semantic dependency.)}
        
        \vspace{0.1cm}
        \textbf{Bayes-Chunk:} ...since 6 is the \cboxA{product of 2 and 3.} \cutmark \cboxB{\textcolor{goodcut}{\textbf{The}} largest three-digit...} \\
        \textit{(Analysis: Boundary aligns with the end of the logical clause.)}
    \end{tcolorbox}
    

        


    \begin{tcolorbox}[colback=white, colframe=gray!30, title=\textbf{Case 2: Symbolic Integrity (Sample ID 5)}]
        \textbf{Fixed-Window:} ...Now we can solve for n: ... n = 34 \cboxA{/} \cutmark \cboxB{\textcolor{badcut}{\textbf{2}}} \\
        \textit{(Analysis: The operand "2" is separated from the operator, causing potential hallucination in calculation.)}
        
        \vspace{0.1cm}
        \textbf{Bayes-Chunk:} ...Now we can solve for n: ... n = 34 / 2 \cboxA{} \cutmark \cboxB{\textcolor{goodcut}{\textbf{n}} = 17} \\
        \textit{(Analysis: The full arithmetic expression is maintained intact.)}
    \end{tcolorbox}
    \vspace{-4pt}
    \caption{\textbf{Comparison of Segmentation Boundaries.} We highlight critical failures in Fixed-Window chunking (red text indicates broken context) versus the adaptive boundaries identified by AnyEdit++ (green text indicates coherent continuations). The vertical bar (\textbf{|}) denotes the exact segmentation point.}
    \label{fig:segmentation_examples}
    \vspace{-4pt}
\end{figure*}

\vspace{-8pt}
\subsection{Experiments on data with higher logical complexity and ultra-long lengths}
\label{sec:long}

Based on the extensive comparative experiments conducted earlier, we observed that the Bayes-Chunk segmentation approach demonstrates greater advantages when applied to data with stronger logical coherence and longer segments (multiple-segment data). Incorporating highly logical and lengthy textual data into the model represents a significant developmental direction for the field of knowledge editing. To further investigate this phenomenon, we introduced a new dataset, \textbf{\textit{QwQ-Long-CoT-Math-GSM-v1}} \cite{qwq32b}, as our editing dataset. This dataset contains a large number of mathematical problems along with corresponding answers incorporating CoT, featuring relatively high average answer lengths. Due to the substantial size of the complete dataset, we selected a subset to form our actual working dataset (refer to \autoref{app:qwq} for the specific processing workflow). We denote this dataset as \textbf{QwQ-Edit}.

To visually illustrate the differences between our constructed \textbf{QwQ-Edit} dataset and the \textbf{EditEverything} dataset, we present comprehensive statistical graphs of the dataset as shown in \autoref{fig:datacmp}. Specifically, we compare the two datasets from two perspectives: text length and logical density. Regarding text length comparison, as shown in the density plot in \autoref{fig:datacmp} (a), most samples in our QwQ-Edit dataset (high-density regions) exceed the maximum sample length in EditEverything. For logical density, we defined a set of semantically relevant terms (see \autoref{app:qwq} for details) and calculated their density across each dataset's samples to estimate logical coherence. \autoref{fig:datacmp} (b) demonstrates that QwQ-Edit consistently maintains higher logical density than EditEverything, with \autoref{fig:datacmp} (c) providing a more comprehensive distribution view. Based on these observations, we can demonstrate to a certain extent that QwQ-Edit is a knowledge editing dataset encompassing more complex logic and longer text lengths.

To verify whether our method achieves a consistent improvement over AnyEdit on the QwQ-Edit dataset, we evaluated both AnyEdit and our method uniformly on this dataset (given the great difficulty of the task, we selected \texttt{Qwen-2.5-7B-Instruct} as our target LLM due to its superior performance on logical reasoning tasks). The results are shown in \autoref{fig:qwq}. We categorized the samples in the dataset based on length and logical density (details in \autoref{app:qwq}) and plotted the trends of BERT Score and BLEU metrics for both our method and AnyEdit. It is evident that regardless of changes in the horizontal axis, our method \textbf{consistently outperforms} AnyEdit on \textbf{both} metrics. This fully corroborates our observations from comparative experiments: when editing longer or more logically coherent texts, the Bayes-Chunk-based segmentation approach yields superior editing performance.

\subsection{Analysis of Fine-Tuning-Based Editing Methods}

Recent studies have also explored knowledge editing through fine-tuning \cite{xiong2025fine}. However, when handling long-form editing targets, these methods usually still rely on fixed-length segmentation, making it difficult to adaptively determine editing boundaries according to the content. To verify the generality of our segmentation strategy, we combine Bayes segmentation with the fine-tuning-based FT-UKE method and report the results in \autoref{tab:ft_uke_results}. As shown in the table, on both \texttt{Llama-3.1-8B-Instruct} and \texttt{Qwen-2.5-7B-Instruct}, even when the original FT-UKE already achieves strong performance, introducing Bayes segmentation further improves the overall BLEU and BS scores. This demonstrates that our proposed segmentation mechanism can serve as a plug-and-play module to enhance fine-tuning-based knowledge editing methods.

\vspace{-10pt}
\section{Case Study}
\vspace{-4pt}
To visually illustrate the difference between our Bayes-Chunk and AnyEdit's fixed-window approaches, we extracted examples of both segmentation methods from EditEverything as shown in \autoref{fig:segmentation_examples}. The Sample ID directly corresponds to the sample's ID in the EditEverything dataset, and the segmentation hyperparameter settings align with those used in actual experiments. As demonstrated by the two cases presented, our Bayes-Chunk approach produces more interpretable and logical paragraph divisions when processing identical sample content. This superior paragraph segmentation enables our method to better support editing of lengthy texts and complex logical structures.

\vspace{-8pt}
\section{Conclusion}
\vspace{-4pt}
In this paper, we focus on long-form, diverse knowledge editing tasks. Building upon AnyEdit's sliding window approach, we partition longer texts into multiple segments. For each segment, we compute the expected Hidden State at the target position. The weight changes for the target are then derived by jointly constraining the Hidden States computed across multiple segments, enabling knowledge encoding into the LLM's weight space. Unlike AnyEdit, we adopt a Bayesian Surprise–based segmentation strategy. By leveraging the LLM's inherent modeling capabilities to compute the Bayesian Surprise distribution of the data to be edited, we identify higher Bayesian Surprise values as segmentation boundaries. This yields more independent and reasonable paragraph divisions, and we theoretically demonstrate the rationale behind Bayes-Chunk for enhancing editing efficiency, validated through extensive experimentation.



\section*{Acknowledgements}
This work was supported by the Guangdong Basic and Applied Basic Research Foundation (Grant No.~2026A1515011579), the HKUST-HKUST(GZ) 1+1+1 Joint Funding Program (Grant No.~C\_2025\_031), and the Guangzhou-HKUST(GZ) Joint Funding Program (Grant No.~2023A03J0008), Education Bureau of Guangzhou Municipality. This work was also supported by Jiangsu Industrial Technology Research Institute (JITRI) and Wuxi National High-Tech District (WND).

\section*{Impact Statement}
This paper presents work whose goal is to advance the field of Machine Learning. There are many potential societal consequences of our work, none of which we feel must be specifically highlighted here.


\nocite{wang2025mdpo,bowen2024beyond,chen2025polaris,chen2025ant}

\bibliography{example_paper}
\bibliographystyle{icml2026}

\newpage
\appendix
\onecolumn

\section{Notation}

\begin{table}[h]
\centering
\small
\caption{Summary of mathematical notations used in this paper.}
\begin{tabular}{ll}
\toprule
\textbf{Symbol} & \textbf{Description} \\
\midrule
$k_t$ & Anchor key representation at anchor position $t$ \\
$v_t$ & Target value corresponding to the edited key $k_t$ \\
$\delta_t$ & Optimized perturbation applied to $k_t$ during editing \\
$W_{\text{out}}$ & Output projection matrix of the language model \\
$D_{\text{edit}}$ & Edit dataset consisting of key--value pairs $\{(k_t, v_t)\}_{t=1}^M$ \\
$M$ & Number of anchor points in a document \\
$L$ & Fixed window length in fixed-window chunking \\
\bottomrule
\end{tabular}
\end{table}

\section{Terminology}

\begin{table}[h]
\small
\caption{Clarification of key terms used in this paper.}
\begin{tabular}{lp{0.7\linewidth}}
\toprule
\textbf{Term} & \textbf{Explanation} \\
\midrule
Anchor Key & A key representation selected as the anchor point for a localized model edit. \\
Cross-segment interference (Crosstalk) & Unintended interaction where edits applied to one text segment affect other segments. \\
Surprisal-based boundary & A segmentation boundary determined by peaks in token-level surprisal. \\
Structural Independence & Geometric orthogonality between anchor keys that minimizes edit interference. \\
Causal Locality & The principle that edits injected at semantically salient positions yield superior control. \\
\bottomrule
\end{tabular}
\end{table}

\newpage
\section{Proof of Structural Independence (\autoref{thm:structural_independence})}
\label{app:proof_independence}

In this section, we derive the closed-form solution for the multi-segment editing objective and explicitly bound the crosstalk error.

\subsection{Optimization Objective}
We formulate the editing of $M$ segments as a constrained Ridge Regression problem. We seek an update $\Delta W$ that maps validation keys $K = [k_1, \dots, k_M]$ to target residual vectors $D = [\delta_1, \dots, \delta_M]$, while preserving general knowledge defined by covariance $C$:
\[
\min_{\Delta W} \mathcal{J}(\Delta W) = \| \Delta W K - D \|_F^2 + \lambda \text{Tr}(\Delta W C \Delta W^T).
\]

\subsection{Derivation of the Update Rule}
Since $\mathcal{J}$ is strictly convex, we find the global minimum by setting the gradient to zero:
\[
\frac{\partial \mathcal{J}}{\partial \Delta W} = 2(\Delta W K - D)K^T + 2\lambda \Delta W C = 0.
\]
Solving for $\Delta W$, we obtain:
\[
\Delta W^* = D K^T (K K^T + \lambda C)^{-1} = D K^T A.
\]
where $A \triangleq (\lambda C + \sum_{i=1}^M k_i k_i^T)^{-1}$ is the Precision Matrix. The update can be written as a sum of rank-1 updates:
\[
\Delta W^* = \sum_{t=1}^M \delta_t k_t^T A.
\]

\subsection{Error Decomposition}
We analyze the effect of this global update on a specific target key $k_j$. The realized perturbation is:
\[
\hat{\delta}_j = \Delta W^* k_j = \left( \sum_{t=1}^M \delta_t k_t^T A \right) k_j = \sum_{t=1}^M \delta_t (k_t^T A k_j).
\]
Decomposing into the signal term ($t=j$) and interference term ($t \neq j$):
\[
\hat{\delta}_j = \underbrace{\delta_j (k_j^T A k_j)}_{\text{Signal}} + \underbrace{\sum_{t \neq j} \delta_t (k_t^T A k_j)}_{\text{Crosstalk } \mathcal{E}_{cross}^{(j)}}.
\]

\subsection{Bounding the Crosstalk}
By the Triangle Inequality, the norm of the crosstalk error is:
\[
\| \mathcal{E}_{cross}^{(j)} \|_2 \leq \sum_{t \neq j} \| \delta_t \|_2 \cdot | k_t^T A k_j |.
\]
Let $\alpha_{tj} \triangleq | k_t^T A k_j |$. Assuming the metric space induced by $A$ is well-conditioned, $\alpha_{tj}$ represents the generalized inner product. Minimizing the upper bound requires minimizing $\alpha_{tj}$, which is equivalent to minimizing the pairwise cosine similarity between distinct keys $k_t$ and $k_j$.

\section{Detailed Empirical Analysis of Segment Independence}
\label{app:vis_segment}
To further substantiate the theoretical claims in \autoref{sec:theory_efficiency}, we provide a detailed experimental comparison between our Bayes-Chunk method and the baseline Fixed-Window approach.
\subsection{Global Statistical Analysis (KDE Plot)}
We evaluate the semantic distinctness of segments generated by different partitioning strategies. For a given long-form sample $S$ partitioned into $N$ segments $\{s_1, s_2, \dots, s_N\}$, we calculate the Intra-sample Mean Similarity $\bar{\sigma}(S)$ as:
\[
\bar{\sigma}(S) = \frac{2}{N(N-1)} \sum_{1 \le i < j \le N} \text{CosSim}(\phi(s_i), \phi(s_j)).
\]
where $\phi(\cdot)$ is the embedding function of the \texttt{Qwen3-Embedding-0.6B} model.
We perform this calculation for all samples in the \textit{EditEverything} dataset. The resulting Kernel Density Estimation (KDE) plot illustrates the probability density of $\bar{\sigma}$ across the dataset.
\begin{itemize}
\item \textbf{Bayes-Chunk:} Mean $\bar{\sigma} = 0.509$. The distribution is concentrated in the lower-similarity region, indicating that the segments are more semantically independent.
\item \textbf{Fixed-Window:} Mean $\bar{\sigma} = 0.594$. The higher mean and right-shifted distribution suggest that fixed-size segments often contain redundant or highly correlated information, which leads to hardware/memory inefficiency and potential editing interference.
\end{itemize}
\subsection{Internal Key Orthogonality (Heatmap Analysis)}
To observe the direct impact on the editing solver, we extract the actual anchor keys $k_t$ from the target editing layer of the \texttt{Llama-3.1-8B-Instruct} model.
\paragraph{Extraction Protocol:} For a sequence of segments $\{s_1, \dots, s_N\}$, we feed the cumulative context (Question + $\sum_{i=1}^t s_i$) into the model. The anchor key $k_t$ is defined as the hidden state at the last token of segment $s_t$ at the target layer $l$, specifically the tensor immediately preceding the output weight matrix $W_{out}$.
\paragraph{Results:} We compute the pairwise cosine similarity matrix $M \in \mathbb{R}^{N \times N}$, where $M_{ij} = \frac{k_i^T k_j}{\|k_i\|_2 \|k_j\|_2}$.
\begin{itemize}
\item \textbf{Fixed-Window Heatmap:} Shows strong off-diagonal activation. This confirms that adjacent fixed-length chunks often produce highly similar keys, leading to the \textit{overwriting} effect described in Theorem \ref{thm:structural_independence}.
\item \textbf{Bayes-Chunk Heatmap:} Displays a sharp diagonal structure with significantly suppressed off-diagonal values. By splitting the text at points of high surprisal, the hidden states are forced to undergo a state transition, resulting in keys that are more orthogonal in the latent space.
\end{itemize}
This visualization provides direct evidence that Bayes-Chunk satisfies the prerequisite for the crosstalk bound $\mathcal{E}_{cross} \to 0$, enabling stable mass-editing in long-form contexts.

\section{Proof of Causal Locality (\autoref{thm:causal_locality})}
\label{app:proof_locality}

In this section, we provide the rigorous derivation for the superior controllability of the immediate predecessor state.

\subsection{Problem Formulation}
Let the gradient flow from the target loss $\mathcal{L}$ to an arbitrary hidden state $h_i^{(\ell)}$ be decomposed via the chain rule:
\[
    \nabla_{h_{i}^{(\ell)}} \mathcal{L} = \underbrace{\left( \frac{\partial \mathcal{L}}{\partial h_{t-1}^{(L)}} \right)^T}_{G_{\text{final}}} \cdot \underbrace{\frac{\partial h_{t-1}^{(L)}}{\partial h_{i}^{(\ell)}}}_{J_{i \to t-1}}.
\]
where $G_{\text{final}}$ represents the gradient at the final layer readout, and $J_{i \to t-1}$ is the Jacobian of the intermediate layers.

\begin{proof}
We analyze the magnitude of the terms for $i=t-1$ versus $i=t-k$.

\textbf{Step 1: The Magnitude of $G_{\text{final}}$.}
Consider the cross-entropy loss $\mathcal{L} = -\log (\text{Softmax}(W_u h_{t-1}^{(L)}))_{y_t}$. The gradient w.r.t logits is $p - \mathbf{e}_{y_t}$.
For a high-surprisal target (where the model initially predicts low probability $p_{y_t} \to 0$), the error vector norm approaches saturation:
\[
    \lim_{p_{y_t} \to 0} \| p - \mathbf{e}_{y_t} \|_2 \approx 1.
\]
Assuming the unembedding matrix $W_u$ is full-rank with smallest singular value $\sigma_{\min} > 0$, we have a lower bound:
\[
    \| G_{\text{final}} \|_2 \ge \sigma_{\min}(W_u) \cdot \| p - \mathbf{e}_{y_t} \|_2 \ge \xi > 0.
\]

\textbf{Step 2: Vertical Propagation (The Highway).}
For the immediate position $i = t-1$, the signal backpropagates vertically. In modern Transformers, residual branches are initialized with scaled variance ($W \sim \mathcal{N}(0, \frac{1}{2dL})$). Thus, the Lipschitz constant of the sub-layer function $F_m$ is small ($\lambda \ll 1$). The Jacobian through the residual stream $I + \frac{\partial F}{\partial h}$ has singular values clustered around 1:
\[
    J_{t-1 \to t-1} \approx I \implies \| G_{\text{final}} J_{t-1 \to t-1} \|_2 \approx \| G_{\text{final}} \|_2.
\]

\textbf{Step 3: Horizontal Propagation (The Bottleneck).}
For a distant position $i = t-k$, the signal must pass through the Attention mechanism: $h_{t-1} = \sum_{j} A_{t-1, j} V_j$. The Jacobian includes the attention weight term:
\[
    J_{t-k \to t-1} \approx A_{t-1, t-k} W_V.
\]
Since $\sum_j A_{t-1, j} = 1$, for any specific distant token $k$, the attention weight is typically small ($A_{t-1, t-k} \ll 1$), acting as a damping factor $\gamma < 1$. Thus:
\[
    \kappa(t-k \to t) \approx \gamma \| G_{\text{final}} \|_2 \ll \| G_{\text{final}} \|_2.
\]

\textbf{Conclusion.}
Comparing the two terms:
\[
    \Delta \kappa_k \approx (1 - \gamma) \| G_{\text{final}} \|_2.
\]
Since $\| G_{\text{final}} \|_2$ is bounded away from zero (Step 1) and $\gamma < 1$ (Step 3), it follows that $\Delta \kappa_k > 0$. This confirms that the residual stream provides a high-bandwidth control channel, while attention acts as an information filter.
\end{proof}

\section{Sensitivity Analysis of Causal Controllability}
\label{app:causal_locality_study}
In \autoref{sec:theory_locality}, we posited that the immediate predecessor state provides the most efficient control over target predictions. Here, we provide the detailed experimental setup and extended results for the gradient sensitivity analysis.
\subsection{Experimental Setup}
We randomly sample long-form instances from the \textit{EditEverything} dataset and feed them into the \texttt{Llama-3.1-8B-Instruct} model. Let $h_{L,T}$ denote the hidden state of the $T$-th (last) token at the $L$-th (final) layer. To measure the intrinsic controllability of each position $(l, i)$, we define the loss as the norm of this final state: $\mathcal{L} = \|h_{L,T}\|_2$. We then compute the gradient sensitivity $G_{l,i}$ for every layer $l \in [1, L]$ and every token position $i$:
\[
G_{l,i} = \left| \frac{\partial \mathcal{L}}{\partial h_{l,i}} \right|_2.
\]
For cross-sample comparison, the sensitivities are normalized within each sample.
\subsection{Vertical vs. Horizontal Decay Analysis}
The analysis in \autoref{fig:theory-2}(a) decomposes the gradient behavior into two orthogonal dimensions:
\begin{enumerate}
\item \textbf{Vertical Stability (Layer-wise):} The left sub-plot shows the mean sensitivity across layers for the target token. The curve is relatively flat, indicating that the gradient signal propagates through the residual stream with minimal attenuation. This supports our assumption that vertical propagation acts as a quasi-isometry.
\item \textbf{Horizontal Decay (Token-wise):} The right sub-plot illustrates the sensitivity as a function of the relative distance from the target token ($T-i$). A steep decline is observed: once the distance exceeds a few tokens, the sensitivity drops by orders of magnitude. This confirms that the attention mechanism acts as a bottleneck for \textit{horizontal} control signals.
\end{enumerate}
\subsection{Layer-Token Heatmap Visualization}
To provide a holistic view, we visualize the full sensitivity matrix $M \in \mathbb{R}^{L \times T}$ as a heatmap (\autoref{fig:theory-2}(b)).
\begin{itemize}
\item \textbf{Observation:} The heatmap exhibits a prominent vertical band at the positions closest to the target token (the rightmost or \textit{latest} columns in the sequence). This band maintains high intensity across almost all layers, while the rest of the map remains sparsely activated.
\item \textbf{Implication:} This \textit{vivid column} provides visual evidence for our Theorem \ref{thm:causal_locality}: the most effective site for knowledge injection is not layer-specific, but rather position-specific. By targeting the immediate predecessors of the information we wish to update, we can minimize the update norm $\|\delta\|$ required to achieve the desired output, maximizing parameter efficiency and reducing collateral damage to distant context.
\end{itemize}

\section{Dataset Details}
\label{app:datasets}

To validate the generalizability and robustness of AnyEdit++, we selected datasets that represent different levels of editing difficulty.

\textbf{EditEverything}: Reliably editing long texts requires maintaining logical chains and syntactic correctness, which standard triplet-based datasets cannot evaluate. We utilize the \textbf{EditEverything} benchmark, which aggregates high-complexity samples from seven diverse domains: \textit{Mathematics, Code, Physics, Chemistry, Biology, News, and Poetry}. Unlike single-sentence edits, these tasks require the model to handle:
\begin{itemize}
    \item \textit{Structural Formatting}: Preserving code indentation or poetic stanzas after editing.
    \item \textit{Logical Coherence}: Ensuring multi-step mathematical derivations remain valid after modifying an initial condition.
\end{itemize}
This serves as the primary testbed for our structure-aware segmentation, highlighting the failure cases of fixed-window approaches.

\textbf{CounterFact}: We use \textbf{CounterFact}~\cite{meng2022locating} to evaluate standard knowledge editing performance. It consists of counterfactual updates (e.g., changing \textit{The capital of France is Paris} to \textit{Rome}). This dataset allows us to measure:
\begin{itemize}
    \item \textit{Efficacy}: Success rate of the edit.
    \item \textit{Locality}: Ensuring the edit does not affect unrelated knowledge.
\end{itemize}
Inclusion of CounterFact ensures that AnyEdit++ retains strong performance on fundamental editing tasks compared to existing baselines.

\textbf{UnKE}: We use \textbf{UnKE}~\cite{deng2024unke} bridges the gap between rigid triplets and free-form text. It evaluates the model's ability to answer questions based on edited knowledge presented in unstructured formats. We use this to verify that our method can generalize edits beyond exact string matching, effectively updating the underlying semantic knowledge.

\section{Detailed Experimental Setup}
\label{app:setup}

\subsection{Implementation Details}
Our implementation is built upon the official codebases of MEMIT and ROME. All experiments are conducted using the PyTorch framework and the HuggingFace Transformers library. The core editing algorithm follows the mass-editing flow, where the update is distributed across multiple layers.

\subsection{Hyperparameters for Target Vector Optimization}
The computation of the target vector $z$ (represented as 'compute\_z\_bayes.py' in our codebase) is a critical step in our method. Unlike the sliding window approach used in baselines, we employ a surprisal-based adaptive segmentation strategy.

\paragraph{Optimization Objective.}
For each segmented semantic unit, we optimize the delta vector $\delta$ to minimize the negative log-likelihood (NLL) of the target sequence while constraining the norm of the change. The loss function $\mathcal{L}$ is defined as:
\[
    \mathcal{L} = \mathcal{L}_{NLL} + \lambda \frac{\|\delta\|_2}{\|z_{init}\|_2^2}.
\]
where $z_{init}$ is the initial representation of the subject, and $\lambda$ is the weight decay factor. We use the \textbf{Adam} optimizer for this process.

\paragraph{Adaptive Segmentation Configuration.}
Our adaptive segmentation relies on the surprisal values (information content) of the target sequence. The specific hyperparameters used to determine boundaries are:
\begin{itemize}
    \item \textbf{Boundary Density Ratio:} We set the number of segments $k$ proportional to the sequence length $L$. Specifically, $k = \lceil L / 40 \rceil$. This ratio was empirically chosen to balance semantic coherence and computational efficiency.
    \item \textbf{Early Window Strategy:} To prevent generation collapse at the beginning of long sequences, we enforce a heuristic constraint: if no boundary is detected within the first $N=3$ tokens based on the global top-$k$ surprisal sorting, we forcibly insert a boundary at the position with the maximum surprisal within this local window ($[0, N)$), replacing the boundary with the lowest surprisal score in the set.
\end{itemize}

\subsection{Global Hyperparameters}
Table~\ref{tab:hyperparams} lists the detailed hyperparameters used in our experiments. 

\begin{table}[h]
\centering
\caption{Hyperparameters used for MEMIT-ARE experiments.}
\label{tab:hyperparams}
\begin{tabular}{l|c}
\toprule
\textbf{Parameter} & \textbf{Value} \\
\midrule
\multicolumn{2}{c}{\textit{Target Vector Optimization ($z$)}} \\
\midrule
Optimizer & Adam \\
Learning Rate (`v\_lr`) & $5 \times 10^{-1}$ \\
Max Gradient Steps (`v\_num\_grad\_steps`) & 20 \\
Loss Threshold (Early Stopping) & $1 \times 10^{-2}$ \\
Weight Decay ($\lambda$) & $5 \times 10^{-1}$ \\
Norm Constraint Factor & $3.0 \times \|z_{init}\|$ \\
\midrule
\multicolumn{2}{c}{\textit{Covariance Statistics}} \\
\midrule
Covariance Dataset & Wikipedia (wikitext) \\
Sample Size (`mom2\_n\_samples`) & 100,000 \\
Update Weight (`mom2\_update\_weight`) & 15000 \\
\midrule
\multicolumn{2}{c}{\textit{Segmentation Logic}} \\
\midrule
Segments Count Formula & $\lceil \text{Length} / 40 \rceil$ \\
Force Early Window Size ($N$) & 3 tokens \\
\bottomrule
\end{tabular}
\end{table}

\subsection{Covariance Statistics}
Following the standard MEMIT setting, we estimate the key-value covariance matrix $C$ using a subset of the Wikipedia dataset. We collect second-moment statistics (`mom2`) over 100,000 samples to ensure the edit minimizes interference with the model's general knowledge. The update is applied by solving the least-squares problem:
\[
    W_{new} = W_{old} + \Lambda (C + K K^T)^{-1} K^T.
\]
where $K$ represents the stacked keys from the editing targets and $\Lambda$ is the residual distributed across layers.

\definecolor{codegreen}{rgb}{0,0.6,0}
\definecolor{codegray}{rgb}{0.5,0.5,0.5}
\definecolor{codepurple}{rgb}{0.58,0,0.82}
\definecolor{backcolour}{rgb}{0.95,0.95,0.92}

\lstdefinestyle{mystyle}{
    backgroundcolor=\color{backcolour},   
    commentstyle=\color{codegreen},
    keywordstyle=\color{magenta},
    numberstyle=\tiny\color{codegray},
    stringstyle=\color{codepurple},
    basicstyle=\ttfamily\footnotesize,
    breakatwhitespace=false,         
    breaklines=true,                 
    captionpos=b,                    
    keepspaces=true,                 
    numbers=left,                    
    numbersep=5pt,                  
    showspaces=false,                
    showstringspaces=false,
    showtabs=false,                  
    tabsize=2
}

\lstset{style=mystyle}


\begin{figure}[t]
\centering
\begin{lstlisting}[language=Python, caption={Core implementation for computing token-level surprisal. We calculate the conditional probability of each token in the answer given the prefix (question and preceding answer tokens).}, label={code:surprisal}]
def compute_surprisal(model, tokenizer, question, answer):
    """
    Computes Bayesian Surprisal for each token in the answer
    conditioned on the question and preceding context.
    """
    model.eval()
    
    # 1. Construct full context
    full_text = f"{question} {answer}"
    enc = tokenizer(full_text, return_tensors="pt", return_offsets_mapping=True)
    input_ids = enc["input_ids"].to(model.device)
    
    # 2. Identify indices corresponding to the answer part
    # (Implementation detail: uses offset_mapping to align chars to tokens)
    answer_indices = find_answer_token_indices(enc, answer)
    
    # 3. Forward pass to get log-probabilities
    with torch.no_grad():
        logits = model(input_ids).logits
        # log_softmax over the vocabulary dimension
        log_probs = torch.log_softmax(logits, dim=-1)

    surprisals = []
    # 4. Calculate Surprisal: -log2 P(x_t | x_{<t})
    for pos_i in answer_indices:
        # Skip the first token of the sequence (no history)
        if pos_i == 0: continue
            
        # The target token at position `pos_i` is predicted 
        # by the hidden state at `pos_i - 1`
        target_token_id = input_ids[0, pos_i]
        log_prob = log_probs[0, pos_i - 1, target_token_id]
        
        # Convert natural log to base-2 surprisal
        surprisal = -log_prob.item() / np.log(2.0)
        surprisals.append(surprisal)

    return np.array(surprisals)
\end{lstlisting}
\end{figure}

\section{Build Our QwQ-Edit Dataset}
\label{app:qwq}

Our QwQ dataset was sampled from the \texttt{fql/qwq\_long\_cot\_math\_gsm\_v1} dataset, which obtained original questions from the GSM-8K and MATH datasets and then constructed CoT responses using QwQ-32B. It comprises a total of 8.2k training examples and 2.04k test examples. We randomly selected 300 samples from the test set to form our QwQ-Edit dataset. An example of the QwQ-Edit dataset is shown in Figure A.

\subsection{Dataset Metric Definitions}
\label{app:appendix_metrics}

To analyze the complexity and logical depth of the \textit{QwQ-Edit} dataset compared to standard baselines, we defined specific metrics for text length and logical density.

\textbf{Logic Density Calculation}: We quantify the logical density of a solution by measuring the frequency of explicit reasoning markers. The \textit{Logic Density} metric is defined as the number of reasoning keywords per 100 words in the generated output:

\[
    \text{Logic Density} = \frac{N_{\text{keywords}}}{N_{\text{words}}} \times 100.
\]

where $N_{\text{words}}$ is the total word count of the solution text, and $N_{\text{keywords}}$ is the count of specific logical connectives identified via case-insensitive pattern matching.

\textbf{List of Reasoning Keywords}: The calculation considers a set of 15 causal and procedural keywords commonly associated with chain-of-thought reasoning. The complete list of keywords used in our analysis is:

\begin{quote}
    \textit{therefore, because, thus, hence, consequently, implies, step, first, second, finally, so, then, since, result, leads to}
\end{quote}

\textbf{Length Metrics}: Text length statistics are derived based on word-level tokenization (splitting by whitespace). \textit{Total Length} refers to the sum of the word counts for both the problem statement (input) and the solution (output).

\section{Additional Data}
Here we present full-text samples from QwQ-Edit mentioned above, providing a more intuitive understanding of the data.

\definecolor{chunkA}{RGB}{235, 245, 255} 
\definecolor{chunkB}{RGB}{255, 240, 230} 
\definecolor{textHL}{RGB}{180, 0, 0}     
\definecolor{ansGood}{RGB}{0, 150, 0}    
\newcommand{\problemBox}[1]{
    \noindent\colorbox{chunkA}{
        \parbox{\dimexpr\linewidth-2\fboxsep}{
            \textbf{\textsf{Problem:}} \\ #1
        }
    }
}
\newcommand{\solutionBox}[1]{
    \vspace{0.1cm}
    \noindent\colorbox{chunkB}{
        \parbox{\dimexpr\linewidth-2\fboxsep}{
            \textbf{\textsf{Solution (Chain-of-Thought):}} \\ #1
        }
    }
}
\newcommand{\selfreflect}[1]{\textcolor{textHL}{\textit{\textbf{[Self-Correction/Review: #1]}}}}
\newcommand{\finalans}[1]{\textcolor{ansGood}{\textbf{#1}}}
\begin{tcolorbox}[colback=white, colframe=gray!40, title=\textbf{Sample 1 (Source: GSM8K) - Multi-step Arithmetic}]
    \footnotesize
    \problemBox{A dog runs through a field and is joined by a group of 4 cats. 2 rabbits join each cat and 3 hares join each rabbit. How many animals are running through the field?}
    
    \solutionBox{
        Let's tackle this problem step by step. We start with a dog running through a field. So, that's one animal. Next, a group of 4 cats joins the dog. ... (omitted basic steps) ...
        
        So, the number of hares is: $8 \text{ rabbits} * 3 \text{ hares/rabbit} = 24 \text{ hares}$.
        Adding these 24 hares to the current total of 13 animals, we get: $13 + 24 = 37 \text{ animals}$.
        
        \vspace{0.1cm}
        \selfreflect{Wait a minute, let me double-check that I haven't missed anything.} We started with 1 dog, added 4 cats, then 2 rabbits per cat, and finally 3 hares per rabbit. That seems correct.
        Alternatively, perhaps there's a different way to approach this problem...
        
        \vspace{0.1cm}
        Total: $1 + 4 + 8 + 24 = 37$. I think this is solid.
        \vspace{0.1cm}
        
        \textbf{Final Answer} \\
        \finalans{\[ \boxed{37} \]}
    }
    \vspace{0.1cm} \hrule \vspace{0.1cm}
    \textit{(Analysis: This sample features a lengthy verification process where the model re-calculates using a "tree structure" approach to confirm the initial result.)}
\end{tcolorbox}
\begin{tcolorbox}[colback=white, colframe=gray!40, title=\textbf{Sample 2 (Source: MATH) - Combinatorics}]
    \footnotesize
    \problemBox{In how many ways can a president and a 2-person committee be chosen from a group of 8 people (where the order in which we choose the 2 people doesn't matter)? (The president cannot be on the committee.)}
    
    \solutionBox{
        So I've got this problem here... Let's break this down.
        First, I need to choose a president... implies 8 ways.
        Next, I need to choose a 2-person committee from the remaining 7 people...
        $C(7, 2) = \frac{7!}{2!(7 - 2)!} = 21$.
        
        Total ways = $8 \times 21 = 168$.
        
        \vspace{0.1cm}
        \selfreflect{Wait a minute, does that make sense?} Another way to think about it is to first select the 2-person committee from all 8 people ($C(8,2)=28$), and then choose the president from the remaining 6 people ($6$ ways). $28 \times 6 = 168$.
        
        \selfreflect{Wait, that's the same result as before.}
        
        \textbf{Final Answer} \\
        \finalans{\[ \boxed{168} \]}
    }
    \vspace{0.1cm} \hrule \vspace{0.1cm}
    \textit{(Analysis: The model explicitly tests two different combinatorial paths (President first vs. Committee first) to ensure consistency.)}
\end{tcolorbox}
\begin{tcolorbox}[colback=white, colframe=gray!40, title=\textbf{Sample 3 (Source: GSM8K) - Algebra Generation}]
    \footnotesize
    \problemBox{Francine has five full boxes of crayons and 5 loose crayons, and her friend has 27 loose crayons. They need to put all of their loose crayons in a box. How many more boxes do they need if Francine has a total of 85 crayons?}
    
    \solutionBox{
        ...First, I need to figure out how many crayons are in each full box.
        \[5 \times (\text{number of crayons per box}) + 5 = 85\]
        Solving for crayons per box gives 16.
        
        Total loose crayons: $5 + 27 = 32$.
        Dividing by box capacity: $32 / 16 = 2$.
        
        \vspace{0.1cm}
        \selfreflect{Wait a minute, let me double-check that.} Francine already has five full boxes... the question is asking how many MORE boxes... Yes, it seems correct.
        
        \textbf{Final Answer} \\
        \finalans{\[ \boxed{2} \]}
    }
\end{tcolorbox}
\begin{tcolorbox}[colback=white, colframe=gray!40, title=\textbf{Sample 4 (Source: GSM8K) - Temporal Arithmetic}]
    \footnotesize
    \problemBox{There were 80 people at a football game on Saturday. On Monday, 20 fewer people were at the football game... (Sequence of updates)... If their expected total audience... is 350, how many more people attended...?}
    
    \solutionBox{
        Let's list out the number for each day:
        \begin{itemize}
            \item Saturday: 80
            \item Monday: $80 - 20 = 60$
            \item Wednesday: $60 + 50 = 110$
            \item Friday: $80 + 60 = 140$
        \end{itemize}
        Total attendance = $80 + 60 + 110 + 140 = 390$.
        Difference = $390 - 350 = 40$.
        
        \textbf{Final Answer} \\
        \finalans{\[ \boxed{40} \]}
    }
\end{tcolorbox}
\vspace{0.3cm}
\begin{tcolorbox}[colback=white, colframe=gray!40, title=\textbf{Sample 5 (Source: GSM8K) - Logic Trap}]
    \footnotesize
    \problemBox{Heather is going to sew 150 aprons... She already sewed 13, today sewed 3x as many... How many should she sew tomorrow if she wants to sew \underline{half of the remaining} number?}
    
    \solutionBox{
        ...Today she sewed: $3 \times 13 = 39$.
        Total sewn so far: $13 + 39 = 52$.
        Remaining needed: $150 - 52 = 98$.
        
        Target for tomorrow is half of remaining: $98 / 2 = 49$.
        
        \vspace{0.1cm}
        \selfreflect{But wait, let's double-check that.} After sewing 49 tomorrow, total sewn is $52+49=101$. Remaining is $49$. So sewing 49 is exactly half of the 98 derived needed. Makes sense.
        
        \textbf{Final Answer} \\
        \finalans{\[ \boxed{49} \]}
    }
\end{tcolorbox}
\end{document}